\theoremstyle{definition}
\newtheorem*{theorem*}{Theorem}
\newtheorem{theorem}{Theorem}
\newtheorem{definition}{Definition}
\newtheorem{proposition}{Proposition}
\newtheorem{lemma}{Lemma}
\newtheorem{remark}{Remark}
\newtheorem*{remark*}{Remark}
\title{
Conformalized Fairness via Quantile Regression
}
\author{%
Meichen Liu$^1$, Lei Ding$^1$, Dengdeng Yu$^2$, Wulong Liu$^3$, \\ \textbf{  Linglong Kong$^{1}$\thanks{Co-Corresponding Authors}, Bei Jiang$^{1*}$} 
 \\
$^1$Department of Mathematical and Statistical Sciences, University of Alberta\\
$^2$Department of Mathematics, University of Texas at Arlington \\
$^3$ Huawei Noah's Ark Lab Canada \\
\texttt{\{meichen1,lding1,lkong,bei1\}@ualberta.ca} \\
\texttt{\{dengdeng.yu\}@uta.edu} \\
\texttt{\{liuwulong\}@huawei.com}
}
\begin{document}

\maketitle

\begin{abstract}

Algorithmic fairness has received increased attention in socially sensitive domains.
While rich literature on mean fairness has been established, research on quantile fairness remains sparse but vital. 
To fulfill great needs and advocate the significance of quantile fairness, we propose a novel framework to learn a real-valued quantile function under the fairness requirement of \emph{Demographic Parity} with respect to sensitive attributes, such as race or gender, and thereby derive a reliable \emph{fair} prediction interval. 
Using optimal transport and functional synchronization techniques, we establish theoretical guarantees of distribution-free coverage and exact fairness for the induced prediction interval constructed by fair quantiles. 
A hands-on pipeline is provided to incorporate flexible quantile regressions with an efficient fairness adjustment post-processing algorithm.
We demonstrate the superior empirical performance of this approach on several benchmark datasets. 
Our results show the model's ability to uncover the mechanism underlying the fairness-accuracy trade-off in a wide range of societal and medical applications. 

\end{abstract}

\section{Introduction}\label{sec:intro}

{
We are increasingly leaning on machine learning systems to tackle human problems. 
A primary objective is to develop intelligent algorithms that can automatically produce accurate decisions which also enjoy equitable properties, as unintended social bias has been identified as a rising concern in various fields \cite{czarnowska2021quantifying,gardeazabal_ugidos_2005,ding2022word}.

As a means of providing quantitative measures of fairness, a number of metrics have been proposed. These metrics can be categorized into three broad categories: group fairness \cite{barocas2017fairness}, individual fairness \cite{Kusner_Loftus_Russell_Silva}, and causality-based fairness \cite{Plecko_Meinshausen}. In contrast to causality-based fairness that requires domain knowledge to develop a fair causal structure and individual fairness that seeks equality only between similar individuals, group fairness does not require any prior knowledge and seeks equality for groups as a whole \cite{castelnovo2022clarification}. 
Among the metrics defined for  group fairness such as equalized odds \cite{Cho_Hwang_Suh,Oneto_Donini_Pontil_2019} and predictive rate parity \cite{chouldechova2017fair}, 
demographic parity (DP) is generic since it does not allow prediction results in aggregate to depend on sensitive attributes \cite{Agarwal_Dudik_Wu_2019, holzer2006affirmative,chzhen_denis_hebiri_oneto_pontil_2020,Yang_Lafferty_Pollard_2019}. 
In particular, an algorithm is said to satisfy DP if its prediction is independent of any given sensitive attribute \cite{Agarwal_Dudik_Wu_2019}. 

 There have been a number of studies on algorithmic fairness concerning DP \cite{ Agarwal_Dudik_Wu_2019,Chzhen_Schreuder_2022,chzhen_denis_hebiri_oneto_pontil_2020,holzer2006affirmative,plevcko2021fairadapt,Yang_Lafferty_Pollard_2019}. 
 In the context regression analysis, much attention have been paid on conditional mean inferences \cite{Agarwal_Dudik_Wu_2019,Chzhen_Schreuder_2022,chzhen_denis_hebiri_oneto_pontil_2020,plevcko2021fairadapt}, few are concerned with conditional quantiles \cite{williamson2019fairness,Yang_Lafferty_Pollard_2019}. 
 As real-world data often exhibit heterogeneity, contain extreme outliers, or do not meet satisfactory distributional assumptions, like Gaussianity, a fairness discussion on conditional quantiles may be more rational and essential since they are able to provide a more complete understanding of the dependence structure between response and explanatory variables \cite{Yang_Lafferty_Pollard_2019}, as well as better accommodate asymmetry and extreme tail behavior \cite{xiao2009conditional}. It should also be noted that bias or unfairness that arises in mean regression may also be propagated through quantile regression, therefore it must be properly dealt with separately: a graphic demonstration can be found in Figure \ref{fig:result:illustration}. More intuitively, we may take an example from a Spanish labor market study \citep{garcia_hernandez_2001,gardeazabal_ugidos_2005}. The study found that in Spain, also in line with other countries, the mean wage gap between men and women is quite substantial: on average, women earn around 70 percent of what men earn. While wage gaps are not uniform across all pay scales, they are greater at higher quantiles than at lower quantiles. As biases and disparities at different quantiles tend to be overshadowed by the mean behavior of the entire population, we propose a novel framework to seek fair predictions at different quantiles. It uses optimal transport techniques \cite{Agueh_Carlier_2011,chzhen_denis_hebiri_oneto_pontil_2020} by transforming \emph{bias-affected} distributions into an \emph{only-fair} Wasserstein-2 barycenter through a kernel-based functional synchronization method \cite{cheng1997unified,Zhang_Muller_2011}, in order to provide fair quantile estimators.

\begin{minipage}[!htbp]{\textwidth}
  \centering
  \captionsetup{type=figure}
    \includegraphics[width=0.6\textwidth]{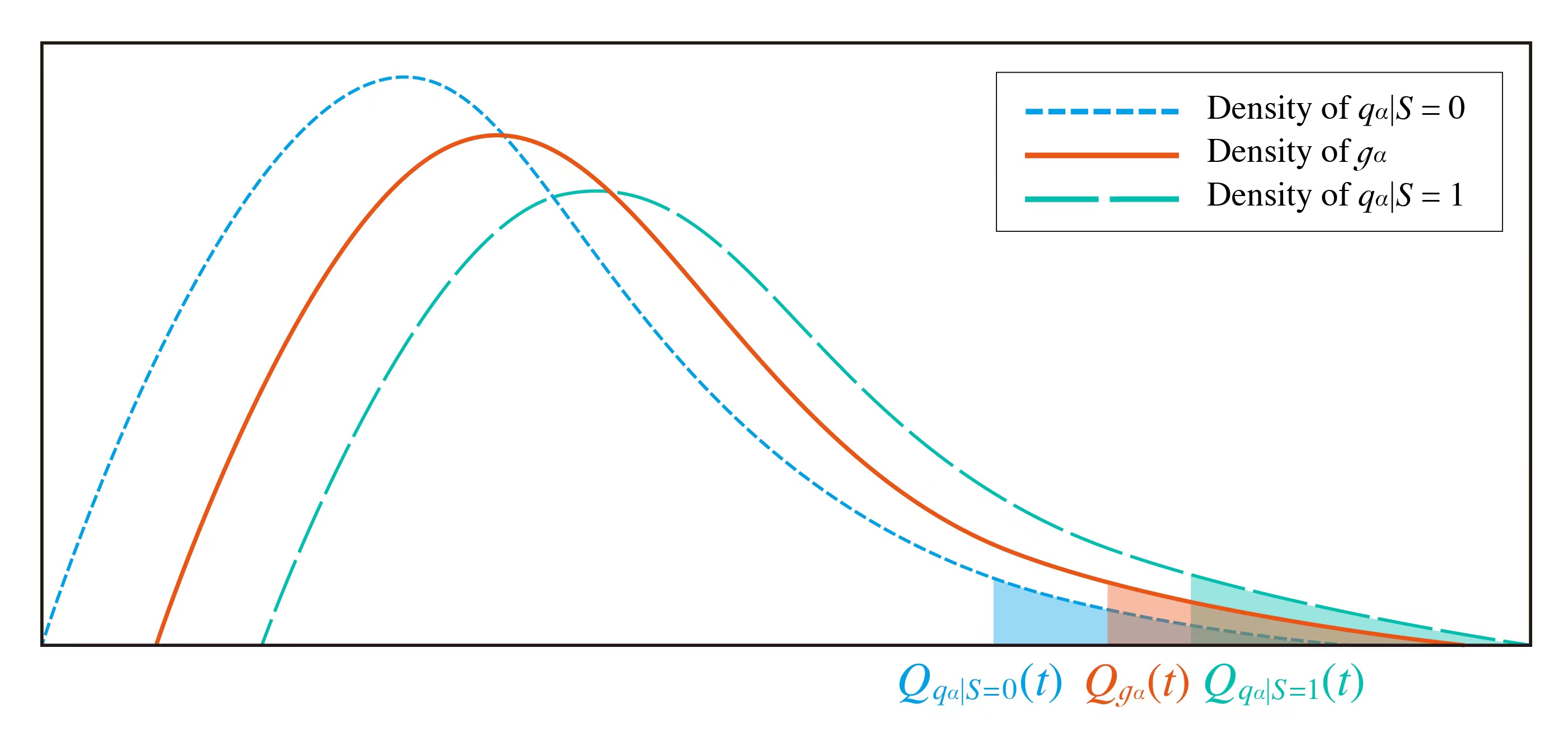}
    \captionof{figure}{An illustration of quantile fairness: for a skewed and heteroscedastic quantile estimation $\{q_{\alpha,i}\}_{i=1}^N$ affected by the sensitive attribute $S\in \{0,1\}$, for example, the higher quantile of the salary distribution, the optimal fair quantile prediction $Q_{g_\alpha}(t), t\in(0,1)$ is derived through a convex combination of the conditional quantile functions of $Q_{q_\alpha|S=0}$ and $Q_{q_\alpha|S=1}$. 
    }
\label{fig:result:illustration}
\end{minipage}

Since quantile fairness poses a number of theoretical challenges, no previous literature has been able to provide any inference results such as prediction intervals concerning quantile fairness. It is imperative to keep in mind that fairness is only one of two legs of the primary goal of modern machine learning algorithms, the other being accuracy. Building a reliable prediction with valid confidence is a significant challenge that is encountered by many machine learning algorithms \cite{zeni2020conformal}. Towards this end, we propose the conformalized fair quantile prediction (CFQP) inspired by the works of \citet{Romano_Barber_Sabatti_Candes_2019,Romano_Patterson_Candes_2019}. Our analysis demonstrates, both mathematically and experimentally, that CFQR provides finite sample, distribution-free validity, DP fairness for different quantiles, and precise control of the miscoverage rate, regardless of the underlying quantile algorithm.

\textbf{Contributions and Outlines.}
In this paper, we propose a new quantile based method with valid inference that enhances both accuracy and fairness while maintaining a balance between the two. It is a novel framework that allows an exact control of prediction miscoverages while ensuring quantile fairness simultaneously. The main contributions are summarized as follows:

\begin{enumerate}[i.]
    \item 
    We successfully transform the problem of searching quantiles under DP fairness to the construction of  multi-marginal Wasserstein-2 barycenters via the optimal transport theory \cite{Agueh_Carlier_2011,chzhen_denis_hebiri_oneto_pontil_2020,Gouic_Loubes_Rigollet_2020}. We incorporate a novel kernel smoothing step into the preceding method, which is particularly advantageous for subgroups whose sample sizes are too small to obtain reliable quantile function estimations. 
    
\item 
  In Section \ref{sec:CFP}, we propose a conformalized fair quantile regression prediction interval (CFQP) inspired by the works of \citet{Romano_Barber_Sabatti_Candes_2019,Romano_Patterson_Candes_2019}. It is mathematically proved to achieve a distribution-free validity, demographic parity on different quantiles, and an exact control of miscoverage rates, regardless of the quantile algorithm used.
  The theoretical validity of prediction interval constructed by CFQP and exact DP of the fair quantile estimators are given in Section \ref{sec:theory} and the supplement.

\item 
The experimental results presented in Section \ref{sec:experiments} include a numerical comparison of the proposed CFQP and fair quantile estimation with both state-of-the-art conformal and fairness-oriented methods. By reducing the discriminatory bias dramatically, our method outperforms the state-of-the-art methods while maintaining reasonable short interval lengths.

\end{enumerate}

\textbf{Related works.}
Existing approaches for building a fair mean regression broadly fall into three classes: pre-processing, in-processing and post-processing. In particular, preprocessing methods focus on transforming the data to remove any unwanted bias \cite{calmon2017optimized,Plecko_Meinshausen,zemel2013learning}; in-processing 
methods aim to build in fairness constraints into the training step \cite{Agarwal_Dudik_Wu_2019,berk2017convex,Kusner_Loftus_Russell_Silva,Oneto_Donini_Pontil_2019}; post-processing methods target to modify the trained predictor \cite{Chzhen_Schreuder_2022,chzhen_denis_hebiri_oneto_pontil_2020,milli2019social}.
As few previous works have focused on the quantile fairness of and fair prediction interval, the most related are \citet{Yang_Lafferty_Pollard_2019}, where a different fairness measure was used. While \citet{Agarwal_Dudik_Wu_2019} mentioned that their reduction-based approach can be adapted into quantile regression, \citet{williamson2019fairness} brought forward a novel conditional variance at risk fairness measure aiming to control the largest subgroup risk. For interval fairness measure, the approach by \citet{Romano_Barber_Sabatti_Candes_2019} achieved equalized coverage among groups without fairness on interval endpoints. 
Methodologically, integrating algorithmic fairness with Wasserstein distance based barycenter problem has been studied in \cite{Agueh_Carlier_2011,Chzhen_Schreuder_2022,chzhen_denis_hebiri_oneto_pontil_2020,Gouic_Loubes_Rigollet_2020,Jiang_Pacchiano_Stepleton_Jiang_Chiappa_2020}. Both in-processing \cite{Agarwal_Dudik_Wu_2019,Jiang_Pacchiano_Stepleton_Jiang_Chiappa_2020} and post-processing \cite{Chzhen_Schreuder_2022,chzhen_denis_hebiri_oneto_pontil_2020} methods were proposed to solve classification and mean regression problems. As a post-processing method, our work is distinct from above-mentioned methods by constructing the DP-fairness for each population quantile, and generating a fair prediction interval accordingly. 
}

\textbf{Notations.}
We denote by $[K]$ the set $\{1, \ldots, K\}$ for arbitrary integer $K$. $|\mathcal{S}|$ represents the cardinality for a finite set $\mathcal{S}$.  ${E}$ and ${P}$ represent the 
expectation and probability and $\mathbbm{1}\{\cdot\}$ is the indicator function. 
Let $\{ Z_n \}_{n=1}^{\infty}$ be a sequence of random variables, and $\{k_n\}_{n=1}^\infty$ be a sequence of positive numbers,
we say that $Z_{n}=O_p(k_{n})$, if
$\lim _{T \rightarrow \infty} \limsup _{n \rightarrow \infty} {P}(|Z_{n}|>T k_{n})=0$, then $Z_{n}/k_{n} =O_{p}(1)$. To denote the equality in distribution of two random variables $A$ and $B$, we write $A\overset{d}{=}B$. 

\section{Problem statement }\label{sec:prob_stat}

Consider the regression problem where a “sensitive characteristic” $S$ is available, which by the U.S. law \cite{Gouic_Loubes_Rigollet_2020,Romano_Barber_Sabatti_Candes_2019} can be enumerated as sex, race, age, disability, etc.
We observe the triplets $\left({X}_{1}, S_{1}, Y_{1}\right), \dots,\left({X}_{n}, S_{n}, Y_{n}\right)$, denote $({X}_{i}, S_{i}, Y_{i})$ by $Z_i$, $i=1,\dots, n$ and $Z_i$ is a random variable in $ \mathbb{R}^{p} \times[K] \times \mathbb{R}$. The aim is to predict the unknown value of $Y_{n+1}$ at a test point $X_{n+1},S_{n+1}$. 
Let ${P}$ be the joint distribution of $Z$, we assume that all the samples $\left\{Z_i\right\}_{i=1}^{n+1}$ are drawn exchangeable, where i.i.d. is a special case.

Our goal is to construct a marginal distribution-free prediction band $C\left(X_{n+1},S_{n+1}\right) \subseteq \mathbb{R}$ that is likely to cover the unknown response $Y_{n+1}$ with finite-sample (nonasymptotic) validity. Formally, given a desired miscoverage rate $\alpha$, the predicted interval satisfies
\begin{equation}\label{sec1:eq:quantl_coverage}
  {P}\left\{Y_{n+1} \in C\left(X_{n+1},S_{n+1}\right)\right\} \geq 1-\alpha  
\end{equation}
for any joint distribution ${P}$ and any sample size $n$, while the left and right endpoint of $C\left(X_{n+1},S_{n+1}\right)$ satisfies the fairness constraint of Demographic Parity concerning the sensitive variable $S$. 

\textbf{Demographic Parity.} 
We introduce the quantitative definition of DP in fair regression and connect the DP-fairness with a quantile regressor $q_\alpha$. The result that $q_\alpha$ can be projected to the fair counterparts using optimal transport will be invoked later. 

Given a fixed quantile level $\alpha$ (it may refer to ${\alpha_{\mathrm{lo}}}$ or ${\alpha_{\mathrm{hi}}}$ indicating the upper and lower quantile estimates for the prediction band $C(X_{n+1},S_{n+1})$). Let $q_{\alpha}: \mathbb{R}^{p} \times [K] \rightarrow \mathbb{R}$ represent an arbitrary conditional quantile predictor.  Denote by $\nu_{q_{\alpha} \mid s}$ the distribution of $(q_{\alpha}(X, S) \mid S=s)$, the Cumulative Distribution Function (CDF) of $\nu_{q_{\alpha} \mid s}$ is given by
\begin{equation}
    F_{\nu_{q_{\alpha} \mid s}}(t)={P}(q_{\alpha}(X, S) \leq t \mid S=s).
\end{equation}
The quantile function $Q_{\nu_{q_{\alpha} \mid s}}=F^{-1}_{\nu_{q_{\alpha} \mid s}}:[0,1] \rightarrow \mathbb{R}$ ,namely, the generalized inverse of $F_{\nu_{q_{\alpha} \mid s}}$, can thus be defined as for all levels $t \in (0, 1]$,
\begin{equation}\label{sec:fair:def_quantile}
    Q_{\nu_{q_{\alpha} \mid s}}(t) = \inf \{y \in \mathbb{R} : F_{\nu_{q_{\alpha} \mid s}}(y)\ge t\} \ \text{with}\  Q_{\nu_{q_{\alpha} \mid s}}(0) = Q_{\nu_{q_{\alpha} \mid s}}(0+).
\end{equation}
To simplify the notations, we will write $F_{q_{\alpha} \mid s}$ and $Q_{q_{\alpha} \mid s}$ instead of $F_{\nu_{q_{\alpha} \mid s}}$ and $Q_{\nu_{q_{\alpha} \mid s}}$ respectively, for any prediction rule $q_{\alpha}$.

In the following, we introduce the definition of Demographic 
Parity (DP), which is most commonly used in the context of fairness research  \cite{ Agarwal_Dudik_Wu_2019,Chzhen_Schreuder_2022,chzhen_denis_hebiri_oneto_pontil_2020,holzer2006affirmative,Oneto_Donini_Pontil_2019}.

\begin{definition}[Demographic Parity]\label{def:Demo_Par}
An arbitrary prediction $g: \mathbb{R}^{d} \times [K] \rightarrow \mathbb{R}$ satisfies demographic parity under a distribution ${P}$ over $(X, S, Y)$, if $g(X,S)$ is statistically independent of the sensitive attribute $S$. Formally, for every $s, s^{\prime} \in [K]$,
$$
\sup _{t \in \mathbb{R}}\left|{P}(g(X, S) \leq t \mid S=s)-{P}\left(g(X, S) \leq t \mid S=s^{\prime}\right)\right|=0.
$$
\end{definition}
Demographic Parity (DP) requires the predictions to be independent of the sensitive attribute, and it demands the Kolmogorov-Smirnov distance \cite{lehmann2005testing} (the difference between CDFs measured in the $l_\infty$ norm) between $\nu_{g \mid s}$ and $\nu_{g \mid s^{\prime}}$ to vanish for all categories $s, s^{\prime}$. 

\section{Quantile Regression and Conformal Prediction}\label{sec:qt_cqr}

In this section, we recall the CQR approach for finite sample, distribution-free prediction interval inference. Quantile regression was proposed by \citet{koenker1978regression} to estimate the $\alpha$-th quantile of the conditional distribution of $Y$ given $\tilde{X}:=(X, S)$ for some quantile level $\alpha\in(0,1)$, since then it has become more pervasive with various applications, such as providing prediction intervals, detecting outliers, or perceiving the entire distribution \cite{Meinshausen,han2019general}. 
Denote the conditional cumulative distribution of $Y$ given $\tilde{X}$ by $F(y\mid \tilde{X}=\tilde{x}):={P}\{Y\le y\mid \tilde{X}=\tilde{x}\}$. The $\alpha$-th conditional quantile prediction is defined as 
$q_\alpha(\tilde{x}):=\inf\{y\in \mathbb{R}:F(y\mid \tilde{X}=\tilde{x})\ge \alpha\}.$
Quantile regression can be cast as an optimization problem \cite{Meinshausen,yu2019sparse,Pietrosanu_Gao_Kong_Jiang_Niu_2021,wang2019wavelet,yu2016partial}, by minimizing the expected check loss function $E(\rho_\alpha)=E[\rho_\alpha(y,q)|\tilde{X}=\tilde{x}]$, where
\begin{equation}\label{sec:quant:eq:quant_obj}
\rho_\alpha(y,q_{\alpha}(\tilde{x}))=\left\{\begin{aligned}
     \alpha |y-q_{\alpha}(\tilde{x})|\quad  &\text{if} \ y\ge q_{\alpha}(\tilde{x}),\\
     (1-\alpha)|y-q_{\alpha}(\tilde{x})|\quad  &\text{if}\   y<q_{\alpha}(\tilde{x}).
     \end{aligned}
  \right.
 \end{equation}

Quantile regression offers a principled way of judging the reliability of predictions by building a prediction interval for the new observation $(\tilde{X}_{n+1}, Y_{n+1})$. In contrast to asymptopia, 
\citet{Romano_Barber_Sabatti_Candes_2019,Romano_Patterson_Candes_2019} brought forward the conformalized quantile regression (CQR) by combining the merits of robust quantile regression with conformal prediction, thus finite sample validity in Eq. \eqref{sec1:eq:quantl_coverage} is guaranteed. Inspired by the split conformal method, a split CQR likewise starts with splitting the data into a proper training set and a calibration set, indexed by $\mathcal{I}_{1}, \mathcal{I}_{2}$ respectively. Given any quantile regression algorithm $\mathcal{Q}$, we then fit two conditional quantile functions $\hat{q}_{\alpha_{\mathrm{lo}}}$ and $\hat{q}_{\alpha_{\mathrm{hi}}}$ on the proper training set:
$
\left\{\hat{q}_{\alpha_{\mathrm{lo}}}, \hat{q}_{\alpha_{\mathrm{hi}}}\right\} \leftarrow \mathcal{Q}\left(\left\{\left(\tilde{X}_{i}, Y_{i}\right): i \in \mathcal{I}_{1}\right\}\right).
$
The conformity scores are calculated to quantify the error made by the plug-in prediction interval $\hat{C}(\tilde{x})=[\hat{q}_{\alpha_{\mathrm{lo}}}(\tilde{x}), \hat{q}_{\alpha_{\mathrm{hi}}}(\tilde{x})]$. We evaluate the scores on the calibration set as
$
E_{k}:=\max \left\{\hat{q}_{\alpha_{\mathrm{lo}}}(\tilde{X}_{k})-Y_{k}, Y_{k}-\hat{q}_{\alpha_{\mathrm{hi}}}(\tilde{X}_{k})\right\}
$
for each $k \in \mathcal{I}_{2}$,
where both undercoverage and overcoverage of the interval are taken into consideration \cite{Romano_Patterson_Candes_2019}. 
Given a new input data $\tilde{X}_{n+1}$, we construct the prediction interval for $Y_{n+1}$ as
$
C\left(\tilde{X}_{n+1}\right)=\left[\hat{q}_{\alpha_{\mathrm{lo}}}\left(\tilde{X}_{n+1}\right)-Q_{1-\alpha}\left(E, \mathcal{I}_{2}\right), \hat{q}_{\alpha_{\mathrm{hi}}}\left(\tilde{X}_{n+1}\right)+Q_{1-\alpha}\left(E, \mathcal{I}_{2}\right)\right],
$
where $Q_{1-\alpha}(E, \mathcal{I}_2) := (1-\alpha)(1 + 1/|\mathcal{I}_2|)$-th empirical quantile of $\{E_k : k\in \mathcal{I}_2\}$ conformalizes the plug-in prediction interval. Note that the constructed interval $C(\tilde{X}_{n+1})$ could be highly influenced by the sensitive variable $S$. 

\section{Conformal fair quantile prediction (CFQP)}\label{sec:CFP}

We formally describe our proposed conformal fair prediction (CFQP) framework for constructing DP fairness constrained prediction intervals in this section. A kernel smoothing quantile function is introduced during the functional synchronization, which can improve the estimation when some subgroups are too small to give reliable sample quantile function estimations. 

\begin{definition}[Wasserstein-2 distance]
 Let $\mu$ and $\nu$ be two univariate probability measures with finite second moments. The squared Wasserstein-2 distance between $\mu$ and $\nu$ is defined as
$$
\mathcal{W}_{2}^{2}(\mu, \nu)=\inf\left\{\int_{\mathbb{R}\times\mathbb{R}}|x-y|^{2} d \gamma(x, y), \gamma \in \Gamma_{\mu, \nu}\right\}
$$
where $\Gamma_{\mu, \nu}$ is the set of probability measures (couplings) on $\mathbb{R} \times \mathbb{R}$   having $\mu$ and $\nu$ as marginals.
\end{definition}

\begin{proposition}[Fair optimal prediction \cite{chzhen_denis_hebiri_oneto_pontil_2020}]\label{prop1: fair_opt_pred}
Assume, for each $s \in [K]$, that the univariate measure $\nu_{q_{\alpha} \mid s}$ has a density and let $p_{s}={P}(S=s)$. Then,
\begin{equation}
   \min _{g_{\alpha} \text { is fair }} {E}\left(q_{\alpha}(X, S)-g_{\alpha}(X, S)\right)^{2}=\min _{\nu} \sum_{s \in [K]} p_{s} \mathcal{W}_{2}^{2}\left(\nu_{q_{\alpha} \mid s}, \nu\right) . 
\end{equation}
Moreover, if $g_{\alpha}$ and $\nu$ solve the l.h.s. and the r.h.s. problems respectively, then $\nu=\nu_{g_{\alpha}}$ and specifically, 
\begin{equation}\label{sec:fair:eq:g_fair}
g_{\alpha}(x,s)=\sum_{s^{\prime} \in [K]} p_{s^{\prime}} Q_{q_{\alpha} \mid s^{\prime}} \circ F_{q_{\alpha} \mid s}\circ q_{\alpha}(x, s) .
\end{equation}
\end{proposition}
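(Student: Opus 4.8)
The plan is to recast the demographic-parity constraint as a coupling constraint, decompose the mean squared error by conditioning on $S$, bound each conditional term below by a one-dimensional Wasserstein-2 distance, and then exhibit an explicit fair predictor that attains this bound via monotone rearrangement; the barycenter formula then drops out of the one-dimensional structure.

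First I would note that, by Definition~\ref{def:Demo_Par}, a predictor $g_\alpha$ is fair if and only if the conditional law $\nu_{g_\alpha \mid s}$ does not depend on $s \in [K]$; denote this common law by $\nu := \nu_{g_\alpha}$. Conditioning on $S$ with $p_s = P(S = s)$ gives
$$
E\left(q_\alpha(X,S) - g_\alpha(X,S)\right)^2 = \sum_{s \in [K]} p_s\, E\left[\left(q_\alpha(X,s) - g_\alpha(X,s)\right)^2 \,\middle|\, S = s\right].
$$
For each $s$, the joint conditional law of $\left(q_\alpha(X,S), g_\alpha(X,S)\right)$ given $S = s$ is a coupling of $\nu_{q_\alpha \mid s}$ and $\nu$, so the $s$-th summand is at least $\mathcal{W}_2^2(\nu_{q_\alpha \mid s}, \nu)$ by the very definition of the Wasserstein-2 distance. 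Taking the minimum over fair $g_\alpha$ (and the induced $\nu$) shows that the left-hand minimum is at least the right-hand minimum.

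For the converse and the explicit formula I would invoke the one-dimensional identity $\mathcal{W}_2^2(\mu, \eta) = \int_0^1 \left(Q_\mu(t) - Q_\eta(t)\right)^2 dt$, so that the right-hand side equals $\min_Q \int_0^1 \sum_{s} p_s\left(Q_{q_\alpha \mid s}(t) - Q(t)\right)^2 dt$; minimizing the integrand pointwise in $t$ (a strictly convex quadratic in $Q(t)$, using $\sum_s p_s = 1$) yields the unique minimizer $Q^\star(t) = \sum_{s' \in [K]} p_{s'} Q_{q_\alpha \mid s'}(t)$, which is again nondecreasing and left-continuous and hence the quantile function of a genuine measure $\nu^\star$, the Wasserstein-2 barycenter. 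Because each $\nu_{q_\alpha \mid s}$ has a density, $F_{q_\alpha \mid s}$ is continuous and $F_{q_\alpha \mid s}(q_\alpha(X,S)) \mid S = s$ is uniform on $(0,1)$, so $T_s := Q^\star \circ F_{q_\alpha \mid s}$ is nondecreasing with $(T_s)_{\#}\,\nu_{q_\alpha \mid s} = \nu^\star$; monotonicity makes it the optimal transport map from $\nu_{q_\alpha \mid s}$ to $\nu^\star$. Setting
$$
g_\alpha(x,s) := T_s\left(q_\alpha(x,s)\right) = \sum_{s' \in [K]} p_{s'}\, Q_{q_\alpha \mid s'} \circ F_{q_\alpha \mid s} \circ q_\alpha(x,s)
$$
makes $\nu_{g_\alpha \mid s} = \nu^\star$ for every $s$ (so $g_\alpha$ is fair and $\nu_{g_\alpha} = \nu^\star$), while $E\left[\left(q_\alpha(X,s) - g_\alpha(X,s)\right)^2 \mid S = s\right] = \mathcal{W}_2^2(\nu_{q_\alpha \mid s}, \nu^\star)$ by the choice of optimal map. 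Summing against $p_s$ shows the left-hand minimum is at most the right-hand one, giving equality; and since any left-hand minimizer must, summand by summand, realize $\mathcal{W}_2^2(\nu_{q_\alpha \mid s}, \nu_{g_\alpha})$ while attaining the common optimal value, the law $\nu_{g_\alpha}$ is itself the (unique) barycenter, i.e. $\nu = \nu_{g_\alpha}$.

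I expect the main obstacle to be the achievability direction: one must argue carefully that the pointwise-averaged quantile function $Q^\star$ is a legitimate quantile function realizing the true Wasserstein barycenter, that $Q^\star \circ F_{q_\alpha \mid s}$ is genuinely Monge-optimal — this is precisely where the absolute continuity of $\nu_{q_\alpha \mid s}$ is used, through continuity of $F_{q_\alpha \mid s}$ and the probability integral transform — and that the resulting $g_\alpha$ is measurable. The lower bound is soft, relying only on the coupling characterization of $\mathcal{W}_2$; the substantive work is the one-dimensional optimal-transport bookkeeping in the converse, together with the uniqueness argument for the barycenter that pins down $\nu = \nu_{g_\alpha}$.
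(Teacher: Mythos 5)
The paper itself does not prove Proposition~\ref{prop1: fair_opt_pred}: it is imported verbatim from \citet{chzhen_denis_hebiri_oneto_pontil_2020}, with the rigorous proof delegated to that reference and to \citet{Jiang_Pacchiano_Stepleton_Jiang_Chiappa_2020}. Your argument is correct and essentially reconstructs that literature proof: the lower bound via the observation that the conditional law of $(q_{\alpha}(X,S),g_{\alpha}(X,S))$ given $S=s$ is a coupling of $\nu_{q_{\alpha}\mid s}$ and the common law $\nu$; the reduction of the right-hand side to $\min_{Q}\int_0^1\sum_s p_s\bigl(Q_{q_{\alpha}\mid s}(t)-Q(t)\bigr)^2\,dt$ via the one-dimensional identity $\mathcal{W}_2^2(\mu,\eta)=\int_0^1(Q_\mu(t)-Q_\eta(t))^2dt$; the pointwise quadratic minimization giving $Q^\star=\sum_{s'}p_{s'}Q_{q_{\alpha}\mid s'}$ (automatically nondecreasing, hence a genuine quantile function, and unique by strict convexity in $L^2(0,1)$); and achievability through the monotone map $T_s=Q^\star\circ F_{q_{\alpha}\mid s}$, where the density assumption enters exactly as you say, through continuity of $F_{q_{\alpha}\mid s}$ and the probability integral transform. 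Your uniqueness argument for the barycenter correctly pins down $\nu=\nu_{g_{\alpha}}$ for any left-hand minimizer.

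One small refinement if you want the ``specifically'' clause to hold for \emph{every} minimizer $g_{\alpha}$, rather than merely exhibiting one minimizer of the stated form: having shown that each conditional coupling attains $\mathcal{W}_2^2(\nu_{q_{\alpha}\mid s},\nu^\star)$, invoke the fact that for quadratic cost with an atomless first marginal the optimal coupling is unique and is the monotone (comonotone) one; this forces $g_{\alpha}(X,s)=T_s(q_{\alpha}(X,s))$ almost surely given $S=s$, which is the displayed formula. This is a one-line addition, since atomlessness of $\nu_{q_{\alpha}\mid s}$ is already guaranteed by the density assumption you use elsewhere.
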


Proposition \ref{prop1: fair_opt_pred} implies that the optimal fair quantile predictor for an input $(x, s)$ is obtained by a nonlinear transformation of the vector $[q_{\alpha}(x, s)]_{s=1}^{K}$ linking to a Wasserstein barycenter problem\cite{Agueh_Carlier_2011,chzhen_denis_hebiri_oneto_pontil_2020}.
The explicit closed form solution comes from \cite{Agueh_Carlier_2011,chzhen_denis_hebiri_oneto_pontil_2020,Gallon_Loubes_Maza_2013}, which relies on the classical characterization of optimal coupling in one dimension of the Wasserstein-2 distance. A rigorous proof is given in \cite{chzhen_denis_hebiri_oneto_pontil_2020,Jiang_Pacchiano_Stepleton_Jiang_Chiappa_2020}.
It shows that a minimizer $g_{\alpha}$ of the $L_{2}$-risk can be used to construct $\nu$ and vice-versa, given $\nu$, there is a explicit expression Eq. \eqref{sec:fair:eq:g_fair} for the multi-marginal Wasserstein barycenter \cite{Agueh_Carlier_2011}.

{ 
First, we provide a sketch of the CFQP approach. 
We start with splitting the whole training data into a proper training set $\mathcal{I}_1$ and a calibration set $\mathcal{I}_2$, 
then fit an arbitrary quantile regression algorithm $\mathcal{Q}$ on $\mathcal{I}_1$,
$
\left\{\hat{q}_{\alpha_{\mathrm{lo}}}, \hat{q}_{\alpha_{\mathrm{hi}}}\right\} \leftarrow \mathcal{Q}\left(\left\{\left(\tilde{X}_{i}, Y_{i}\right): i \in \mathcal{I}_{1}\right\}\right).
$
We apply the fitted quantile algorithm $\mathcal{Q}$ on the calibration set $\mathcal{I}_2$ to obtain the predicted $\{\hat{q}_{\alpha_{\mathrm{lo}}}(\tilde{X}_i), \hat{q}_{\alpha_{\mathrm{hi}}}(\tilde{X}_i)\}_{i\in\mathcal{I}_2}$. Since the quantile estimates for $\mathcal{I}_2$ will be used for conformalization, it is essential to transform them into fair ones, i.e. $\hat{g}_{\alpha,i}, \forall i\in\mathcal{I}_2$ (Eq. (9)), through Algorithm \ref{alg:quantile_normlz}. Finally, for a test point $\tilde{X}_{n+1}$, we will predict two quantile estimates $\hat{q}_{\alpha}(x,s)$ affected by the sensitive variable $S$ by $\mathcal{Q}$, then apply the functional synchronization (details in Algorithm \ref{alg:quantile_normlz}) and calibration (Algorithm \ref{alg:cfp}) steps in turn to generate the fair constraint prediction interval $ C(\tilde{X}_{n+1})$ for $Y_{n+1}$.
}

Next, we explicate in detail how to remove the effect of the sensitive variable for the predicted quantile estimates.
By Proposition \ref{prop1: fair_opt_pred}, the optimal fair quantiles take the form of Eq. \eqref{sec:fair:eq:g_fair}. Therefore, we propose an empirical optimal fair quantile estimator $\hat{g}_\alpha$ that relies on the plug-in principle. In particular, Eq.\eqref{sec:fair:eq:g_fair} indicates that for each quantile level $\alpha$ and each category $s \in [K]$, we only need estimators for the regression function $\hat{q}_{\alpha}$, the proportion $\hat{p}_{s}$,  the cumulative distribution function $F_{\hat{q}_{\alpha} \mid s}$ and the quantile function $Q_{\hat{q}_{\alpha} \mid s}$.

Note that we can empirically estimate the CDF and quantile function for each sensitive group in the calibration set $\mathcal{I}_2$ separately. 
Hence for each quantile level $\alpha$,
let $N_s:=|\mathcal{I}_2^s|$, and the quantile estimators $(\hat{q}_1^s,\hat{q}_2^s,\dots,\hat{q}_{N_s}^s)$\footnote{$\hat{q}_i^s$ depends on the quantile level $\alpha$, we suppress $\alpha$ for notational simplification.} are  calculated through the fitted quantile regression $\mathcal{Q}$ 
with training set $\mathcal{I}_1$.  
We define the augmented random variable for each data point in $\mathcal{I}_2$,
$$
\tilde{q}_{i}^{s}:=\hat{q}_{i}^{s}+U_i^{s}([-\sigma,\sigma]) \quad \forall i\in \mathcal{I}_2^s,  s\in [K],
$$

where $U_i^{s}$ are i.i.d. random variables, uniformly distributed on $[-\sigma, \sigma]$ for some small positive $\sigma$, and independent from all the previously introduced random variables. It serves as a smoothing random variable, for the random variables $\tilde{q}_{i}^{s}$ are i.i.d. continuous for any ${P}(Y|\tilde{X})$ and $\mathcal{Q}$. Otherwise, the original $\hat{q}_i^s$ might have atoms resulting in a non-zero probability to observe ties in the group $\{\hat{q}_i^s\}$ for $s=1,\dots,K$. This trick, also called jittering \cite{chambers2018graphical,chzhen_denis_hebiri_oneto_pontil_2020} is often used for data visualization for tie-breaking.
Using the above quantities, we build the CDF and quantile function estimators for each subgroup $s^{\prime}\in[K]$ as follows,
\begin{align}
&\label{sec:cfr:eq:F_hat_q}\hat{F}_{q_{\alpha}|s^{\prime}}(t)= N_s^{-1} \sum_{i=1}^{N_s} \mathbbm{1}\left\{\tilde{q}_{i}^{s^\prime} \leq t\right\}, \quad\text{for all } t \in \mathbb{R},\\
&\label{eq:sec:cfr:F_hat_2inv}\hat{Q}_{2, q_{\alpha}|s^{\prime}}(t)=\int_0^1 \hat{F}^{-1}_{q_{\alpha}|s^{\prime}}(v)K_h(t-v)dv, \quad t\in (0,1).
\end{align}
The smoothed kernel estimator Eq.\eqref{eq:sec:cfr:F_hat_2inv} was firstly proposed by \citet{cheng1997unified}, where 
$K_h(\cdot)=K(\cdot/h)/h$ is a kernel function chosen as a probability density function that is symmetric around zero with bandwidth parameter $h > 0$.

If the quantile functions ${Q}_{2, q_{\alpha}|s^{\prime}}$ is differentiable, the derivative $Q^{\prime}_{s^{\prime}}(t):={Q}_{2, q_{\alpha}|s^{\prime}}^{\prime}(t) $ for $t \in(0,1)$ is the quantile density function \cite{cheng1997unified,Zhang_Muller_2011}. We hereby give an estimation bound for Eq. \eqref{eq:sec:cfr:F_hat_2inv} using kernel smoothing. For this purpose, we invoke the conditions (A1) - (A3)  that are needed for deducing the following proposition. They can also be found from  \cite{Zhang_Muller_2011} and are included in the supplementary material.

\begin{proposition}\label{sec:cfr:prop:smooth_quantile_est}
 Under conditions (A1), (A2), and (A3), we have
$$
\sup _{s^{\prime}} \sup _{t \in[0,1]}\left|\hat{Q}_{2, q_{\alpha}|s^{\prime}}(t)-{Q}_{q_{\alpha}|s^{\prime}}(t)\right|=O_{p}\left(N^{-\nicefrac{1}{2}}\right), \quad s^{\prime}=1, \ldots, K.
$$
\end{proposition}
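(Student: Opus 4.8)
The plan is to decompose the supremum error into a stochastic part coming from the empirical CDF and a deterministic smoothing (bias) part coming from the kernel convolution, and bound each at the $O_p(N^{-1/2})$ rate uniformly over $s'\in[K]$ and $t\in[0,1]$. First I would write
\[
\hat{Q}_{2,q_\alpha|s'}(t)-Q_{q_\alpha|s'}(t)
= \underbrace{\int_0^1\bigl(\hat{F}^{-1}_{q_\alpha|s'}(v)-F^{-1}_{q_\alpha|s'}(v)\bigr)K_h(t-v)\,dv}_{\text{stochastic term}}
+ \underbrace{\Bigl(\int_0^1 F^{-1}_{q_\alpha|s'}(v)K_h(t-v)\,dv - Q_{q_\alpha|s'}(t)\Bigr)}_{\text{smoothing bias}},
\]
where I identify $Q_{q_\alpha|s'}=F^{-1}_{q_\alpha|s'}$ as the true (population) quantile function of $\tilde q^{s'}$. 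For the smoothing bias, since under (A1)--(A3) the quantile function $Q_{q_\alpha|s'}$ is smooth (with bounded quantile density $Q'_{s'}$, and presumably a bounded second derivative), a Taylor expansion of $F^{-1}_{q_\alpha|s'}$ inside the convolution together with the symmetry of $K$ and $\int K=1$ gives a bias of order $O(h^2)$ uniformly in $t$ and $s'$ (with an $O(h)$ boundary adjustment handled by the convention $Q(0)=Q(0+)$ and the compact support of $K_h$ for $h$ small); choosing $h \asymp N^{-1/2}$ — or more conservatively any $h=O(N^{-1/4})$ — makes this term $O(N^{-1/2})$ or smaller.

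For the stochastic term, the key tool is the Dvoretzky--Kiefer--Wolfowitz inequality applied to $\hat{F}_{q_\alpha|s'}$: since the $\tilde q_i^{s'}$ are i.i.d.\ continuous, $\sup_t|\hat{F}_{q_\alpha|s'}(t)-F_{q_\alpha|s'}(t)| = O_p(N_s^{-1/2})$, and a standard inversion argument (using the bounded, strictly positive density guaranteed by the conditions, so that $F^{-1}$ is Lipschitz on the relevant range) transfers this to $\sup_{v\in[\delta,1-\delta]}|\hat{F}^{-1}_{q_\alpha|s'}(v)-F^{-1}_{q_\alpha|s'}(v)| = O_p(N_s^{-1/2})$. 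Because $\int_0^1 K_h(t-v)\,dv \le 1$, the convolution is a contraction in sup-norm, so the stochastic term inherits the $O_p(N_s^{-1/2})$ bound; the boundary region $v\in[0,\delta]\cup[1-\delta,1]$ contributes only $O(h)$ in measure against a bounded integrand, hence is negligible. Finally, uniformity over $s'\in[K]$: since $K$ is fixed and finite, $\max_{s'} O_p(N_s^{-1/2}) = O_p(N^{-1/2})$ with $N=\min_s N_s$ (or the total size, depending on the paper's convention), by a union bound over the finitely many groups.

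The main obstacle I expect is the behavior near the endpoints $t\in\{0,1\}$: the kernel $K_h(t-\cdot)$ is not fully supported inside $[0,1]$ there, so both the bias analysis and the quantile-inversion step degrade, and the quantile density $Q'_{s'}$ may blow up at the tails unless (A1)--(A3) explicitly bound it. Handling this cleanly requires either the boundary convention already stated ($Q(0)=Q(0+)$), an assumption that $K$ has compact support so the boundary layer has width $O(h)\to 0$, or a boundary-corrected kernel; I would lean on whichever of these (A1)--(A3) provides, and confine the delicate estimate to showing the $O(h)$-width boundary strip contributes $o_p(N^{-1/2})$ after the bandwidth choice. The interior argument is then routine DKW-plus-Taylor bookkeeping.
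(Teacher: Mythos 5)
Your overall strategy (DKW for the empirical CDF, Lipschitz inversion to the quantile scale, kernel contraction for the stochastic term, Taylor expansion for the smoothing bias, union bound over the finitely many groups) is a genuinely different route from the paper, which simply invokes Theorem 2.1(2) of Cheng (1997) after checking that (A1)--(A3) imply Cheng's conditions (Q1)--(Q3) and (K1)--(K3), and then notes that the bound there is universal in $s'$ and that (A1) permits the extension from $[a,b]\subset(0,1)$ to $[0,1]$. The interior part of your argument is sound in spirit, but there is a genuine gap exactly at the point you flagged and then deferred: the boundary. For $t$ near $0$ or $1$, essentially the full kernel mass of $K_h(t-\cdot)$ sits on $v$ in a strip of width $O(h)$ at the endpoint, so the boundary contribution to your stochastic term is of the order of $\sup_{v\in\text{strip}}\bigl|\hat{F}^{-1}_{q_\alpha|s'}(v)-F^{-1}_{q_\alpha|s'}(v)\bigr|$, not $O(h)$ times a harmless constant. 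That supremum involves the extreme order statistics, and without the tail condition in (A1) (the Cs\"org\H{o}--R\'ev\'esz-type weight condition $\sup_u u(1-u)|J_{s'}(u)|\le\gamma$ together with $\inf_t Q'_{s'}(t)\ge c_0$) the empirical quantile process is in general \emph{not} uniformly $O_p(N^{-1/2})$ near $0$ and $1$; DKW plus a Lipschitz inversion only gives you $[\delta,1-\delta]$. None of your proposed fixes (the convention $Q(0)=Q(0+)$, compact support of $K$, boundary-corrected kernels) supplies that uniform rate; what does supply it is the strong approximation of the quantile process by weighted Brownian bridges, which is precisely the machinery behind Cheng's theorem and the reason (A1)--(A2) are stated the way they are.

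A second, smaller problem: you treat the bandwidth as something you may choose ($h\asymp N^{-1/2}$ or $h=O(N^{-1/4})$), but the proposition has to hold for any bandwidth compatible with (A3), which only requires $\limsup h^2N^\rho<\infty$ for some $\rho>1/2$ and so allows $h$ of order roughly $N^{-1/4}\gg N^{-1/2}$; with such $h$, any boundary estimate that is only $O(h)$ or $O_p(h)$ does not beat the target rate $N^{-1/2}$. Note also that (A3) already \emph{assumes} the $O(N^{-\rho})$ approximation property for the kernel smoother on compacts, so your Taylor-expansion bias bound is a re-derivation of what the paper takes as a hypothesis. To close your argument honestly you would either have to reprove the weighted strong-approximation bound near the endpoints under (A1)--(A2), or do what the paper does and cite Cheng's result, using (A2) and the universality of his bound (his equation (2.7)) to get uniformity over $s'$, with (A3)'s assumption $N_{s'}/N\to\tau_{s'}$ bounded away from $0$ and $\infty$ converting the group rates $N_{s'}^{-1/2}$ into the stated $N^{-1/2}$.
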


The motivation for including a smoothing step is twofold: 
First, smoothing the quantile function eliminates the troublesomeness in defining arbitrary quantiles from the empirical one when the sample sizes of subgroups are small. 
Second, the proposed kernel smoothing improves second-order efficiency by alleviating the relative deficiency \cite{falk1984relative,Zhang_Muller_2011}. 
\begin{remark}
One can utilize various kernels such as the Gaussian or Epanechnikov kernel with adaptive bandwidth for better practical performance. Other smoothing methods such as splines or local linear fitting can likewise be applied with equal effectiveness. 
\end{remark}
Consequently, for each quantile level $\alpha$, the functional synchronized quantile estimator is 
\begin{equation}\label{eq:sec:cfr:ghat_est}
\hat{g}_{\alpha,i}=\sum_{s^{\prime}=1}^{K} \hat{p}_{s^{\prime}} \hat{Q}_{2, q_{\alpha}|s^{\prime}}\circ \hat{F}_{q_{\alpha}|s} \circ \tilde{q}_i^s, \quad  \forall i\in\mathcal{I}_2.
\end{equation}
The proposed estimator can be deemed as the empirical counterpart with additional randomization of the explicit fair optimal formula Eq.\eqref{sec:fair:eq:g_fair}.

To conformalize the adjusted fair quantiles Eq \eqref{eq:sec:cfr:ghat_est},  we need to compute the conformity scores $E_i$ for each $i \in \mathcal{I}_2$ that quantify the error made by the plug-in fair prediction interval $\hat{C}^g(\tilde{x})=[\hat{g}_{\alpha_{\mathrm{lo}}}(\tilde{x}),\hat{g}_{\alpha_{\mathrm{hi}}}(\tilde{x})]$. The scores are evaluated on the calibration set as 
\begin{equation}
    E_i:=\max\{\hat{g}_{\alpha_{\mathrm{lo}},i}-Y_i,Y_i-\hat{g}_{\alpha_{\mathrm{hi}},i}\}.
\end{equation}
At the last stage, for a new data point $\tilde{X}_{n+1}=(x,s)$, and $\alpha\in\{\alpha_{\mathrm{lo}},\alpha_{\mathrm{hi}}\}$, by defining 
$$
    \tilde{q}_{1,i}^{s}=\hat{q}_{i}^{s} +U_i^{s}([-\sigma,\sigma]) \quad \forall i\in \mathcal{I}_1^s \quad \text{ and } \quad \tilde{q}_{\alpha}(x,s)=\hat{q}_{\alpha}(x,s)+U([-\sigma,\sigma]).
$$

We use the empirical CDF of training set \footnote{Still, $\hat{q}_{i}^{s}$ depends on quantile level $\alpha$.}
\begin{equation}
    \textstyle{\hat{F}_{1, q_{\alpha}|s}(t):=\frac{1}{|\mathcal{I}_1^s|+1}\left(\sum_{i=1}^{|\mathcal{I}_1^s|} \mathbbm{1}\left\{\tilde{q}_{1,i}^{s}< t\right\}+U([0,1])\left(1+\sum_{i=1}^{|\mathcal{I}_1^s|} \mathbbm{1}\left\{\tilde{q}_{1,i}^{s}=t\right\}\right)\right)}
\end{equation}
to estimate the location $\hat{F}_{1, q_{\alpha}|s}\circ \tilde{q}_{\alpha}(x,s)$. Thus the fair quantile estimator is built as follows 
\begin{equation}\label{sec:cfr:eq:g_hat_test}
  \hat{g}_{\alpha}(x,s)=\sum_{s^{\prime}=1}^{K} \hat{p}_{s^{\prime}} \hat{Q}_{2, q_{\alpha}|s^{\prime}} \circ \hat{F}_{1, q_{\alpha}|s} \circ \tilde{q}_{\alpha}(x,s),\forall \alpha\in\{\alpha_{\mathrm{lo}},\alpha_{\mathrm{hi}}\}.  
\end{equation}

The fair prediction interval for $Y_{n+1}$ is constructed as
\begin{equation}
    C(\tilde{X}_{n+1}) = [\hat{g}_{\alpha_{\mathrm{lo}}}(x,s)- Q_{1-\alpha}(E, \mathcal{I}_2),\hat{g}_{\alpha_{\mathrm{hi}}}(x,s)+ Q_{1-\alpha}(E, \mathcal{I}_2)],
\end{equation}

where $Q_{1-\alpha}(E, \mathcal{I}_2) := (1-\alpha)(1 + 1/|\mathcal{I}_2|)$-th empirical quantile of $\{E_i : i\in \mathcal{I}_2\}$ will adjust the plug-in fair prediction interval.
We present the pseudo-codes of CFQP as well as the construction of $\hat{g}_{\alpha}$ for Eq. \ref{eq:sec:cfr:ghat_est} in Algorithm \ref{alg:cfp}, \ref{alg:quantile_normlz} respectively. 

\setcounter{algorithm}{0}
\begin{algorithm}[htb]\small
\caption{\small Split Conformal Fair Prediction (CFQP)}
\label{alg:cfp}
\begin{algorithmic}[1] 
\renewcommand{\algorithmicrequire}{{{\bf Input}:}}
\REQUIRE 
$\mathcal{D}=\{(X_i,S_i, Y_i)\}_{i=1}^n$; miscoverage level $ \alpha \in (0, 1)$; quantile regression algorithm $\mathcal{Q}$.

\STATE Randomly split $[n]$ into disjoint proper training and calibration indices $\mathcal{I}_1$,  $\mathcal{I}_2$.

\STATE Fit two conditional quantile functions on the training set $\{\hat{q}_{\alpha_{\mathrm{lo}}},\hat{q}_{\alpha_{\mathrm{hi}}}\}\leftarrow\mathcal{Q}(\{(X_i,S_i,Y_i), i\in\mathcal{I}_1\})$.

\STATE Call functional Synchronization (Algorithm \ref{alg:quantile_normlz}) to calculate $\{\hat{g}_{\alpha_{\mathrm{lo}}},\hat{g}_{\alpha_{\mathrm{hi}}}\}$ for each $i\in \mathcal{I}_2$.

\STATE Compute $E_i\leftarrow\max \left\{\hat{g}_{\alpha_{\mathrm{lo}}}\left(X_{i}\right)-Y_{i}, Y_{i}-\hat{g}_{\alpha_{\mathrm{hi}}}\left(X_{i}\right)\right\}$ for $ \forall \ i \in \mathcal{I}_2$.

\STATE Compute $Q_{1-\alpha}(E,\mathcal{I}_2) \leftarrow (1 -\alpha)(1+ \nicefrac{1}{|\mathcal{I}_2|})$-th empirical quantile of $\{E_i : i \in \mathcal{I}_2\}$.

\STATE For a new test point $(x,s)$, compute $\{\hat{g}_{\alpha_{\mathrm{lo}}}(x,s),\hat{g}_{\alpha_{\mathrm{hi}}}(x,s) \}$ through Algorithm \ref{alg:quantile_normlz}

\renewcommand{\algorithmicrequire}{{{\bf Output}:}} 
\REQUIRE
Fair prediction interval $C(x,s)=[\hat{g}_{\alpha_{\mathrm{lo}}}(x,s)-Q_{1-\alpha}(E,\mathcal{I}_2),\hat{g}_{\alpha_{\mathrm{hi}}}(x,s)+Q_{1-\alpha}(E,\mathcal{I}_2)]$ for $(X_{n+1},S_{n+1}) = (x, s)$.
\end{algorithmic}
\end{algorithm}

\setcounter{algorithm}{1}
\begin{algorithm}[htb]\small
\caption{\small Functional Synchronization}
\label{alg:quantile_normlz}
\begin{algorithmic}[1] 
\renewcommand{\algorithmicrequire}{{{\bf Input}:}}
\REQUIRE 
Calibration set $\{(X_i,S_i)\}_{i\in\mathcal{I}_2}$ or new point $(x,s)$; base quantile estimator $\mathcal{Q}$;\\ slack parameter $\sigma$; 
training set $\{(X_i,S_i)\}_{i\in\mathcal{I}_1}$;
\IF{Calibration set $\{(X_i,S_i)\}_{i\in\mathcal{I}_2}$} \FOR{$\alpha\in\{\alpha_{\mathrm{lo}},\alpha_{\mathrm{hi}}\}$}
\STATE $\{\tilde{q}_{\alpha}(X_i,S_i)\}\leftarrow \{q_{\alpha}(X_i,S_i)+U([-\sigma,\sigma])\}_{i\in \mathcal{I}_2}$   \hfill{$\triangleright\  U([-\sigma,\sigma])$ are used for tie-breaking}

\FOR{$s^\prime\in [K]$}
\STATE 
Compute $\hat{F}_{q_{\alpha}|s^{\prime}}(t)$, and $\hat{F}^{-1}_{2, q_{\alpha}|s^{\prime}}(t)$ by Eq. \eqref{sec:cfr:eq:F_hat_q} and \eqref{eq:sec:cfr:F_hat_2inv}.

\STATE
Obtain $\hat{g}_{\alpha}(X_i, S_i)\leftarrow\sum_{s^{\prime}=1}^{K} \hat{p}_{s^{\prime}} \hat{F}^{-1}_{2, q_{\alpha}|s^{\prime}} \circ \hat{F}_{q_{\alpha}|s^{\prime}} \circ \tilde{q}_{\alpha}(X_i, S_i) , \ \forall i\in \mathcal{I}_2$
\ENDFOR
\ENDFOR

\ELSIF {New test point $(x,s)$}
\FOR{$\alpha\in\{\alpha_{\mathrm{lo}},\alpha_{\mathrm{hi}}\}$}

\STATE $\{\tilde{q}^s_{1,\alpha}\}\leftarrow \{\hat{q}_{\alpha}^s+U([-\sigma,\sigma])\}_{i\in \mathcal{I}_1^s} \  \text {and} \  \tilde{q}_{\alpha}(x,s)\leftarrow q_{\alpha}(x,s)+U([-\sigma,\sigma])$

\STATE Compute $\hat{g}_{\alpha}(x, s)\leftarrow\sum_{s^{\prime}=1}^{K} \hat{p}_{s^{\prime}} \hat{F}^{-1}_{2, q_{\alpha}|s^{\prime}} \circ \hat{F}_{1, q_{\alpha}|s} \circ \tilde{q}_{\alpha}({x}, s) $ by Eq. \eqref{eq:sec:cfr:F_hat_2inv} and \eqref{sec:cfr:eq:F_hat_q}
\ENDFOR
\ENDIF
\renewcommand{\algorithmicrequire}{{{\bf Output}:}}
\REQUIRE fair quantile prediction $\hat{g}_{\alpha}$ for calibration set or new test point $(x,s)$.
\end{algorithmic}
\end{algorithm}

\section{Theoretical results}\label{sec:theory}

We provide a statistical analysis of the proposed algorithm with coverage and DP-fairness guarantees in this part.

\begin{theorem}[Prediction coverage guarantee]\label{sec:theory:thm:cfr}
 If $(\tilde{X}_i,Y_i), i = 1, \dots,n+1$ are exchangeable, then the prediction interval $C(\tilde{X}_{n+1})$ constructed by the split CFQP algorithm satisfies $${P}\{Y_{n+1}\in C(\tilde{X}_{n+1})\}\ge 1-\alpha.$$ 
 Moreover, if the conformity scores $E_i$ are almost surely distinct, the prediction interval is nearly exactly calibrated, 
 $${P}\{Y_{n+1}\in C(\tilde{X}_{n+1})\}\le 1-\alpha+1/(|\mathcal{I}_2|+1).$$ 
\end{theorem}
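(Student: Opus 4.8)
The plan is to follow the split–conformal template of \citet{Romano_Patterson_Candes_2019,Romano_Barber_Sabatti_Candes_2019}, with the fitted quantile functions replaced by the fair synchronized estimators $\hat g_{\alpha_{\mathrm{lo}}},\hat g_{\alpha_{\mathrm{hi}}}$. First I would condition on the proper training fold $\mathcal{I}_1$, so that the base quantile maps $\hat q_{\alpha_{\mathrm{lo}}},\hat q_{\alpha_{\mathrm{hi}}}$ are frozen, and rewrite the coverage event. By the definitions of $C(\tilde X_{n+1})$ and of $Q_{1-\alpha}(E,\mathcal{I}_2)$, the event $\{Y_{n+1}\in C(\tilde X_{n+1})\}$ is exactly $\{E_{n+1}\le Q_{1-\alpha}(E,\mathcal{I}_2)\}$, where $E_{n+1}:=\max\{\hat g_{\alpha_{\mathrm{lo}}}(X_{n+1},S_{n+1})-Y_{n+1},\,Y_{n+1}-\hat g_{\alpha_{\mathrm{hi}}}(X_{n+1},S_{n+1})\}$ and $Q_{1-\alpha}(E,\mathcal{I}_2)$ is the $\lceil(1-\alpha)(|\mathcal{I}_2|+1)\rceil$-th smallest element of $\{E_i:i\in\mathcal{I}_2\}$. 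Thus the theorem reduces to the pair of statements $1-\alpha\le {P}\big(E_{n+1}\le Q_{1-\alpha}(E,\mathcal{I}_2)\big)\le 1-\alpha+1/(|\mathcal{I}_2|+1)$ about the empirical quantile of a suitable sample.

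The key step is to show that $(E_i)_{i\in\mathcal{I}_2\cup\{n+1\}}$ is exchangeable. The raw tuples $\{(X_i,S_i,Y_i):i\in\mathcal{I}_2\cup\{n+1\}\}$ are exchangeable (the fold assignment is independent of the data and the $Z_i$ are exchangeable by assumption), and augmenting each tuple with its independent jitter variable $U_i^{S_i}\sim\mathrm{Unif}[-\sigma,\sigma]$ and the auxiliary $\mathrm{Unif}[0,1]$ draws preserves exchangeability since these are i.i.d.\ and independent of everything else. I would then verify that the map sending the augmented pool to the score vector is permutation invariant: the ingredients of $\hat g_\alpha$ in Eq.~\eqref{eq:sec:cfr:ghat_est}—the proportions $\hat p_{s'}$, the empirical CDFs $\hat F_{q_\alpha\mid s'}$ of Eq.~\eqref{sec:cfr:eq:F_hat_q}, and the kernel-smoothed quantile functions $\hat Q_{2,q_\alpha\mid s'}$ of Eq.~\eqref{eq:sec:cfr:F_hat_2inv}—are symmetric functionals of the augmented sample, so each $E_i$ depends only on its own tuple and on the \emph{unordered} pool, and the same rule is applied to every index. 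Given exchangeability, the standard quantile inequality for exchangeable variables (as in \citet{Romano_Barber_Sabatti_Candes_2019}) yields ${P}(E_{n+1}\le Q_{1-\alpha}(E,\mathcal{I}_2))\ge 1-\alpha$; and when the $E_i$ are almost surely distinct—which the jitter guarantees, since the $\tilde q_i^s$ are then continuous and ties have probability zero—the rank of $E_{n+1}$ in the pooled sample of size $|\mathcal{I}_2|+1$ is uniform, so ${P}(E_{n+1}\le Q_{1-\alpha}(E,\mathcal{I}_2))\le \lceil(1-\alpha)(|\mathcal{I}_2|+1)\rceil/(|\mathcal{I}_2|+1)\le 1-\alpha+1/(|\mathcal{I}_2|+1)$, which is the claimed two-sided bound.

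The main obstacle is exactly the symmetry verification in the second step, because the fair synchronization is \emph{data dependent}: unlike vanilla CQR, the transformation applied at calibration is built from the calibration fold itself, so one must ensure that the test point is processed by $\hat g_\alpha$ in the same way as a calibration point—i.e.\ that the reference multiset feeding $\hat F_{q_\alpha\mid s'}$ and $\hat Q_{2,q_\alpha\mid s'}$ is the same unordered pool for every index whose score enters the comparison. This also requires reconciling the calibration construction with the test-point construction Eq.~\eqref{sec:cfr:eq:g_hat_test}, which uses the training-fold CDF $\hat F_{1,q_\alpha\mid s}$: the cleanest route is to observe that $\hat g_\alpha$ is a measurable function of $\mathcal{I}_1$ alone together with a symmetric functional of the augmented calibration-plus-test pool, so that conditioning on $\mathcal{I}_1$ and on that symmetric functional leaves the remaining randomness exchangeable. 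Once this is in place, the reduction of the coverage event and the empirical-quantile inequalities are routine and mirror the CQR argument.
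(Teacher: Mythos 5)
Your proposal follows the same route as the paper's own proof: condition on the proper training fold, rewrite the coverage event as $\{E_{n+1}\le Q_{1-\alpha}(E,\mathcal{I}_2)\}$, invoke the inflated-quantile lemma for exchangeable random variables (Lemma 2 of the supplement) to obtain both the lower bound and, under a.s.\ distinctness, the upper bound, and then integrate over $\mathcal{I}_1$. The only substantive difference is that you single out the exchangeability of the conformity scores as the delicate step, whereas the paper simply asserts it (``since the original pairs are exchangeable, so are the calibration variables $E_i$''). Your concern there is legitimate and, in fact, sharper than the published argument: the calibration scores are built from $\hat F_{q_\alpha\mid s}$ and $\hat Q_{2,q_\alpha\mid s'}$ computed on $\mathcal{I}_2$ alone, while the test score routes through $\hat F_{1,q_\alpha\mid s}$ computed on $\mathcal{I}_1$, so the scores are \emph{not} obtained by applying one fixed (given $\mathcal{I}_1$) symmetric rule to each element of the pooled set $\mathcal{I}_2\cup\{n+1\}$; for the same reason, your proposed repair --- treating $\hat g_\alpha$ as a symmetric functional of the augmented calibration-plus-test pool --- does not literally match the construction either, since the reference multisets differ between calibration and test indices. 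Neither your sketch nor the paper's proof fully closes this step, so apart from correctly flagging a gap that the paper leaves implicit, your argument is the same as theirs and is correct to exactly the same extent.
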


\begin{remark}
Corollary 1 in the supplementary material gives an extension for the conformalization step which allows coverage errors to be spread arbitrarily over the left and right tails. Controlling the left and right tails independently yields a stronger coverage guarantee. 
\end{remark}

\begin{theorem}[Demographic parity guarantee]\label{sec:theory:thm:DP}

For any joint distribution ${P}$ of $({X}, S, Y)$, any $\sigma>0$, as well as the base quantile estimator $\hat{q}_\alpha: \mathbb{R}^{p} \times[K] \rightarrow \mathbb{R}$ constructed on labeled data, the estimator $\hat{g}_\alpha$ defined in Eq. \eqref{sec:cfr:eq:g_hat_test} satisfies
$$
(\hat{g}_\alpha({X}, S) \mid S=s)\overset{d}{=}\left(\hat{g}_\alpha({X}, S) \mid S=s^{\prime}\right) \quad \forall s, s^{\prime} \in[K] .
$$
\end{theorem}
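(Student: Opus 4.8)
The plan is to recognize $\hat g_\alpha(\cdot,s)$ in Eq.~\eqref{sec:cfr:eq:g_hat_test} as the jittered plug-in version of the population map \eqref{sec:fair:eq:g_fair} from Proposition~\ref{prop1: fair_opt_pred}, and to run exactly the mechanism that makes \eqref{sec:fair:eq:g_fair} fair: a probability integral transform (PIT) that washes out the group label, composed with a map that never sees the group label. Concretely, I would write $\hat g_\alpha(x,s)=\Phi\big(W_s(x)\big)$ with $W_s(x):=\hat F_{1,q_\alpha|s}\big(\tilde q_\alpha(x,s)\big)\in[0,1]$ and $\Phi(u):=\sum_{s'\in[K]}\hat p_{s'}\,\hat Q_{2,q_\alpha|s'}(u)$. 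The outer map $\Phi$ uses the data only through the calibration sample $\mathcal I_2$ (the smoothed quantile functions $\hat Q_{2,q_\alpha|s'}$ and the slacks attached to $\mathcal I_2$) and the proportions $\hat p_{s'}$, so it carries no dependence on the conditioning label $s$. Thus the whole statement reduces to: (i) conditionally on $S=s$, the scalar $W_s(X)$ is $\mathrm{Unif}(0,1)$ for \emph{every} $s$; and (ii) this uniform variable may be taken independent of $\Phi$. Given (i)--(ii), $\hat g_\alpha(X,S)\mid S=s$ equals $\Phi(U)$ with $U\sim\mathrm{Unif}(0,1)$ independent of $\Phi$, a law free of $s$, which is the asserted $\overset{d}{=}$ (equivalently, the Kolmogorov--Smirnov distance of Definition~\ref{def:Demo_Par} is zero); the interval endpoints $\hat g_\alpha(X,S)\pm Q_{1-\alpha}(E,\mathcal I_2)$ then inherit DP for the same reason, once one notes $Q_{1-\alpha}(E,\mathcal I_2)$ lies in the $\mathcal I_2$-based $\sigma$-field used below.

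For (i) and (ii) I would condition throughout on the $\sigma$-field $\mathcal A$ generated by the $\mathcal I_2$ data, the slacks attached to $\mathcal I_2$, and $\hat p$, and treat the base estimator $\hat q_\alpha$ as a fixed measurable map (the theorem takes it as given, ``constructed on labeled data''; if it is learned one would instead condition on the independent labeled sample producing it, keeping that sample disjoint from the group data feeding $\hat F_{1,q_\alpha|s}$). Under $(\mathcal A, S=s)$, the jittered training quantiles $\tilde q_{1,i}^s=\hat q_\alpha(X_i,s)+U_i^s([-\sigma,\sigma])$, $i\in\mathcal I_1^s$, and the jittered test quantile $\tilde q_\alpha(X,s)=\hat q_\alpha(X,s)+U([-\sigma,\sigma])$ arise by applying one fixed function to the $X$-coordinates of an exchangeable family of raw observations and adding i.i.d.\ uniform slacks, hence they form an exchangeable, almost surely distinct family of $N_s+1$ variables ($N_s:=|\mathcal I_1^s|$); atomlessness is exactly what the slack buys. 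The randomized empirical CDF $\hat F_{1,q_\alpha|s}$ --- whose extra $U([0,1])$ factor is there precisely to dissolve ties --- then gives, almost surely, $\hat F_{1,q_\alpha|s}(\tilde q_\alpha(X,s))=(R+V-1)/(N_s+1)$, where $R$ is the rank of the test value in the pooled family and $V\sim\mathrm{Unif}[0,1]$ is the independent randomizer; by exchangeability $R\sim\mathrm{Unif}\{1,\dots,N_s+1\}$, so $(R+V-1)/(N_s+1)\sim\mathrm{Unif}(0,1)$, and the point is that this holds for every $s$ even though the law $\nu_{q_\alpha\mid s}$ of the group's quantile predictions varies --- the PIT forgets it. Since $\Phi$ is $\mathcal A$-measurable, it is a fixed function under this conditioning, so $\hat g_\alpha(X,S)\mid(S=s,\mathcal A)$ has a law not depending on $s$; integrating over $\mathcal A$ finishes.

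I expect the delicate point to be the exchangeability feeding the PIT, together with disciplined bookkeeping of what is conditioned on. One must be sure that the test quantile $\tilde q_\alpha(X,s)$ is genuinely exchangeable with $\{\tilde q_{1,i}^s\}_{i\in\mathcal I_1^s}$ given $(\mathcal A,S=s)$: this is transparent when $\hat q_\alpha$ acts as a fixed map, but if $\hat q_\alpha$ were refit on $\mathcal I_1$, the in-sample predictions on $\mathcal I_1^s$ would be coupled to $\hat q_\alpha$ in a way the out-of-sample test prediction is not, and exchangeability would fail --- so the hypothesis that $\hat q_\alpha$ is (conditionally) independent of the subsample used to build $\hat F_{1,q_\alpha|s}$ must be made explicit. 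The rest is routine: checking that any $\sigma>0$ makes the relevant laws atomless (so ``a.s.\ distinct'' holds and the tie term is a.s.\ zero), that the randomized-CDF identity delivers \emph{exactly} $\mathrm{Unif}(0,1)$ rather than an asymptotic statement, and that $\Phi$ is a well-defined finite (monotone) map. Incidentally, the kernel-smoothing bound (Proposition~\ref{sec:cfr:prop:smooth_quantile_est}) is not used here: DP holds for every $\sigma>0$ and every bandwidth, because only the $s$-freeness of $\Phi$ matters, not its accuracy.
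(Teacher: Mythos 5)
Your proposal is correct and follows essentially the same route as the paper: you factor $\hat g_\alpha=\Phi\circ W_s$ with the barycenter map $\Phi$ built only from $\mathcal I_2$ (hence $s$-free and independent of the location statistic), and show the randomized location statistic $W_s(X)=\hat F_{1,q_\alpha|s}\circ\tilde q_\alpha(x,s)$ is exactly $\mathrm{Unif}(0,1)$ under exchangeability --- the paper does precisely this, citing Lemma E.1 of Chzhen and Schreuder where you instead rederive it via the rank representation $(R+V-1)/(N_s+1)$. Your explicit caveat about conditioning on the $\sigma$-field of $\mathcal I_2$ and about exchangeability of the in-sample $\{\tilde q^s_{1,i}\}_{i\in\mathcal I_1^s}$ with the out-of-sample test prediction is a legitimate point that the paper's proof passes over by simply asserting exchangeability, but it is a refinement of the same argument rather than a different one.
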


{
Quantile DP guarantees provided by Theorem \ref{sec:theory:thm:DP} are derived directly from distribution-free properties on rank and order statistics in Lemma E.1., Theorem 7.2, of \citet{Chzhen_Schreuder_2022}. Further information can be found in their papers and the references they provide. We extend the estimator $\hat{g}$ for quantile regression based on the estimators developed in their work with solid theoretical foundations. In contrast to the seminal work of \citet{Chzhen_Schreuder_2022}, we regard the densities and quantile functions as functional data, namely, as samples of stochastic processes. Typically, this approach is used by economists when dealing with the densities of income distribution across populations. As a result of introducing the kernel smoothing step, potential performance improvements are demonstrated in the supplemental material.
}

\section{Experiments}\label{sec:experiments}

To evaluate our proposed method \footnote{We utilize the local linear fitting smoothing method in the experiments.}, we report the performance of post-processing fairness adjustment on quantiles through four benchmark datasets: Law School (LAW), Community\&Crime (CRIME), MEPS 2016 (MEPS), Government Salary (GOV). A detailed description of these datasets can be found in the supplementary material. The code for reproducing our results is avaiable at \url{https://github.com/Lei-Ding07/Conformal_Quantile_Fairness}.




\begin{minipage}[!htbp]{\textwidth}
\centering
\scriptsize
 \begin{tabularx}{\textwidth}{XXXXXXXXX}
    \toprule
    \multirow{2}{*}{} &
      \multicolumn{4}{c}{LAW} &
      \multicolumn{4}{c}{CRIME}  \\
      \cmidrule(lr){2-5} \cmidrule(lr){6-9}
      & Coverage & Length & KS(lo) & KS(hi) & Coverage & Length & KS(lo) & KS(hi)\\
    \midrule
  Ln-CQR & 90.16$\pm$0.47 & 0.46$\pm$.004 & 0.39$\pm$0.03 & 0.11$\pm$0.02 & 90.22$\pm$1.88 & 1.30$\pm$0.05 & 0.62$\pm$0.06 & 0.53$\pm$0.06 \\

\textbf{Ln-CFQP}   &90.02$\pm$0.51 & 0.46$\pm$.004  & 0.02$\pm$0.01 & 0.02$\pm$0.01 & 90.44$\pm$1.84 & 1.64$\pm$0.05 & 0.11$\pm$0.03 & 0.12$\pm$0.04 \\
    \midrule
RF-CQR & 90.25$\pm$0.55 & 0.39$\pm$.005 & 0.20$\pm$0.02 & 0.15$\pm$0.02 & 90.27$\pm$1.66 & 1.15$\pm$0.03& 0.64$\pm$0.05 & 0.59$\pm$0.05 \\
\textbf{RF-CFQP}  & 90.11$\pm$0.48 & 0.38$\pm$.004 & 0.02$\pm$.008 & 0.02$\pm$.009 & 90.34$\pm$1.84 & 1.54$\pm$0.04 & 0.12$\pm$0.04 & 0.12$\pm$0.03 \\
\midrule
NN-CQR & 90.00$\pm$0.50 & 0.40$\pm$0.02 & 0.41$\pm$0.07 & 0.18$\pm$0.05 & 90.01$\pm$1.89 & 1.16$\pm$0.05 & 0.70$\pm$0.05 & 0.63$\pm$0.06\\

\textbf{NN-CFQP} & 90.01$\pm$0.51 & 0.39$\pm$0.01 & 0.02$\pm$.009 & 0.03$\pm$.009 & 89.95$\pm$1.62 & 1.54$\pm$0.12 & 0.12$\pm$0.04 & 0.12$\pm$0.03 \\
\midrule

    \multirow{2}{*}{} &
      \multicolumn{4}{c}{MEPS} &
      \multicolumn{4}{c}{GOV} \\
      \cmidrule(lr){2-5} \cmidrule(lr){6-9} 
      & Coverage & Length & KS (lo) & KS(hi)& Coverage & Length & KS (lo) & KS(hi)\\
    \midrule
  Ln-CQR & 89.92$\pm$0.66& 0.66$\pm$0.01 & 0.09$\pm$0.03 & 0.33$\pm$0.05 & 90.00$\pm$0.19 & 0.79$\pm$.002 & 0.26$\pm$.014 & 0.44$\pm$0.02 \\
    \textbf{Ln-CFQP}   & 89.99$\pm$0.69 & 0.66$\pm$0.01  & 0.03$\pm$0.01 & 0.03$\pm$0.01 & 90.02$\pm$0.19 & 0.78$\pm$.002& 0.05$\pm$0.01 & 0.04$\pm$0.01 \\
    \midrule
RF-CQR & 90.07$\pm$0.65 & 0.38$\pm$.009 & 0.19$\pm$0.02 & 0.30$\pm$0.03 & 90.03$\pm$0.17 & 0.61$\pm$.002 & 0.29$\pm$0.01 & 0.28$\pm$0.02 \\
\textbf{RF-CFQP}  & 90.38$\pm$0.60 & 0.39$\pm$0.01 & 0.02$\pm$0.01 & 0.03$\pm$0.01 & 90.03$\pm$0.17 & 0.62$\pm$.002 & 0.05$\pm$0.01 & 0.04$\pm$0.01 \\
\midrule
NN-CQR & 89.95$\pm$0.68 & 0.37$\pm$0.04 & 0.24$\pm$0.09 & 0.37$\pm$ 0.06 & 90.01$\pm$0.19 & 0.58$\pm$0.01 & 0.28$\pm$0.03 & 0.32$\pm$0.04\\

\textbf{NN-CFQP} & 89.97$\pm$0.61 & 0.37$\pm$0.04 & 0.03$\pm$0.01 & 0.04$\pm$0.01 & 90.01$\pm$0.18 & 0.59$\pm$0.01 & 0.05$\pm$0.01 & 0.05$\pm$0.01 \\
\bottomrule
 \end{tabularx}
 \captionsetup{type=table}
\captionof{table}{Results reported on test set of 200 repeated experiments with $\alpha = 0.1$. CQR refers to the conformalized quantile regression in \cite{Romano_Patterson_Candes_2019}. Ln, RF, and NN denote the linear, random forest, and neural network quantile regression models. 
 Our methods are shown in bold.}
\label{sec:exp:tab:result_cqr}
\end{minipage}

\begin{minipage}[!htbp]{\textwidth}
\begin{center}
\captionsetup{type=figure}
\includegraphics[width=\textwidth]{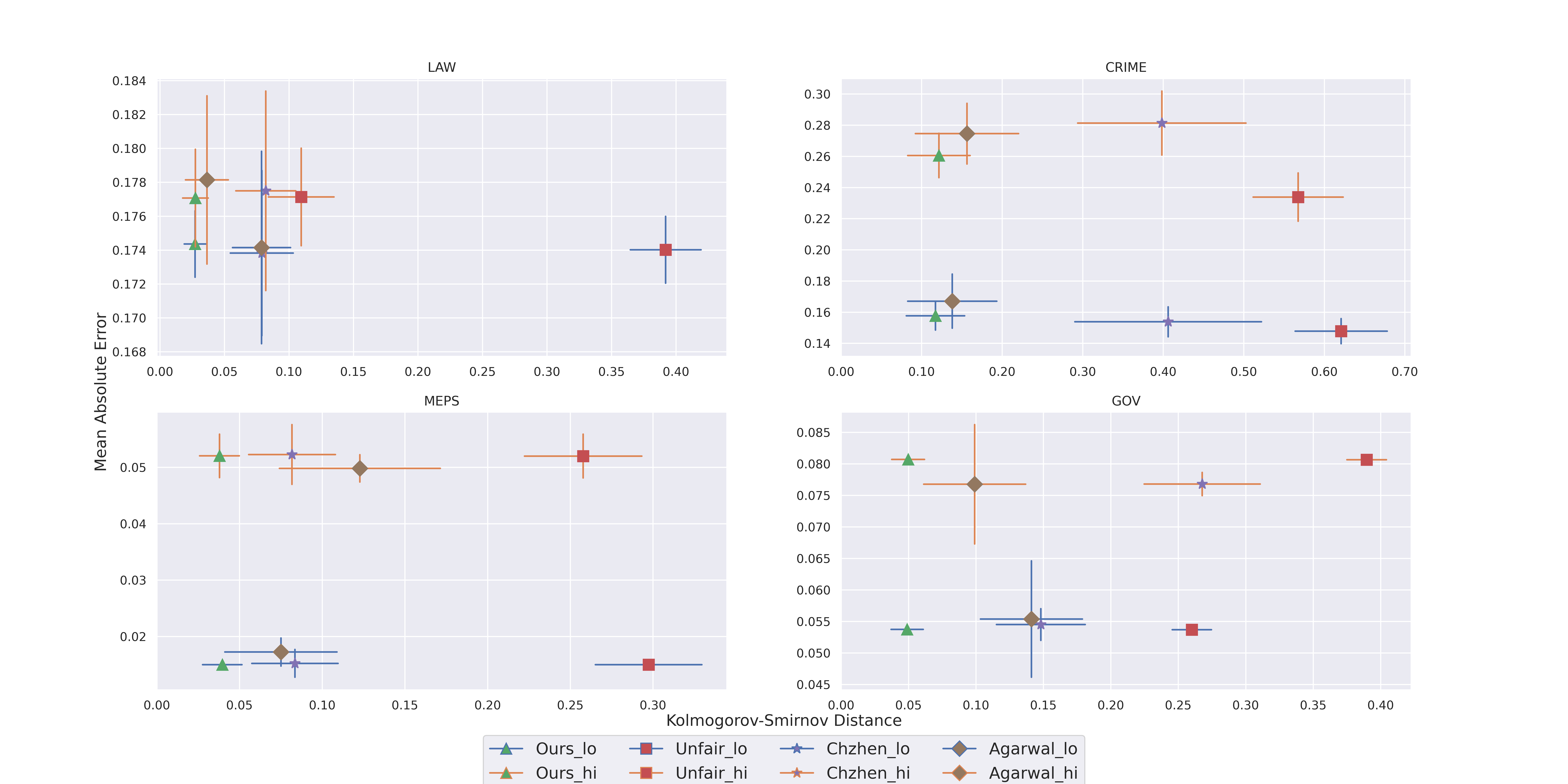}
\captionof{figure}{ Results for estimating the  lower ($\alpha_{\mathrm{lo}}$) and upper ($\alpha_{\mathrm{hi}}$) quantiles using some state-of-the-art DP-fairness requirement methods on all the datasets. `Unfair', `Chzhen', and `Agarwal' stand for the linear quantile model without fairness adjustment, barycenter method \citep{chzhen_denis_hebiri_oneto_pontil_2020} and reduction-based algorithm \citep{Agarwal_Dudik_Wu_2019} respectively. We present the MAE and KS of lower and upper quantile estimations. A Linear quantile model is implemented in this comparison.} 
\label{fig:result:fairness_comparison}
\end{center}
\end{minipage}

We measure the violation of DP-fairness of the quantiles required by Definition \ref{def:Demo_Par}  
through the empirical Kolmogorov-Smirnov (KS) distance. The value represents the disparity between groups $\mathcal{Z}^s= \{(X, S, Y)\in \mathcal{Z}: S=s\}$ for all $s\in [K]$,
\begin{equation*}
     \textstyle{\mathrm{KS}(g_\alpha)=\underset{s,s^{\prime}\in [K]}{\max}\  \underset{t\in\mathbb{R}}{\sup}\left\lvert\frac{1}{|\mathcal{Z}^s|}\underset{(X,S,Y)\in \mathcal{Z}^s}{\sum} \mathbbm{1}{\left\{g_\alpha({X,S})\le t\right\}}-\frac{1}{|\mathcal{Z}^{s^\prime}|}\underset{(X,S,Y)\in \mathcal{Z}^{s^\prime}}{\sum}\mathbbm{1}{\left\{g_\alpha({X,S})\le t\right\}}\right\rvert}.
\end{equation*}

\textbf{Experiment results.}  In Table \ref{sec:exp:tab:result_cqr}, 
we report the average performance of the proposed CFQP over 200 randomly training-test splits as well as the baseline model CQR by the coverage rate, length of prediction interval, and the KS distance of the interval endpoint. We split the training data into proper training and calibration sets of equal sizes. Throughout the experiments, the nominal miscoverage rate is fixed to $\alpha=0.1$. Among pre-existing quantile algorithms, we select three leading variants: \textbf{linear model}\cite{koenker1978regression}, \textbf{random forests} \cite{Meinshausen} and \textbf{neural networks} \cite{taylor2000quantile}. Overall, our CFQP likewise CQR constructs prediction bands attaining desirable coverage around 90\%, as claimed in Theorem \ref{sec:theory:thm:cfr}. Random forest based approaches tend to be slightly more conservative than the other two w.r.t the coverage rate among all four datasets. 

In the KS column concerning the DP fairness of interval endpoints, our CFQP method greatly reduces the discriminatory bias (quantified by KS) by 70\% up to 90\% compared to that of CQR. In addition, the lengths of the prediction intervals mostly remain the same except for the Crime dataset, probably due to  its inherent high discriminatory bias between sensitive groups. In addition, CFQP is more robust according to the standard errors over experiment repetitions.



Figure \ref{fig:result:fairness_comparison} presents the comparison of our post-processing fairness adjustment procedure on quantiles $\hat{g}_{\alpha}$ using the test set $\mathcal{Z}_{test}=\{(X_i,S_i,Y_i)\}_{i=1}^{n_{test}}$ with some state-of-the-art fairness algorithms. Since most of the algorithms are targeted at mean prediction, there is no direct comparison with our quantile fairness method; we accordingly modified the existing methods into quantile versions for comparison. A detailed description can be found in the supplementary material.

The points in Figure \ref{fig:result:fairness_comparison} represents the mean of 200 repeated experiments with $x$-axis as KS distance and $y$-axis as Mean Absolute Error(MAE), where $\operatorname{MAE}(g_\alpha)=\nicefrac{1}{n_{test}}\sum_{\mathcal{Z}_{test}}|Y-g_\alpha(X,S)|$ measures the prediction error of quantiles, the bars are the standard error on both axes. The optimal points should locate at the bottom left corner of the graph, where smaller KS distance and smaller MAE are achieved. In each subplot, our method consistently performs better with the smallest KS distance while keeping the MAE equal or slightly higher than the others or the unfair version.  Note that due to the highly right skewness of real datasets, the MAE of the upper quantile estimation is larger than that of the lower quantile for all quantile approaches.
Additional ablation studies for computational time analysis and testing the kernel smoothing approach are incorporated in the supplementary material.

\section{Conclusion and future work}\label{sec:conclusion}

Conformal fair quantile regression is a novel approach for creating fair prediction intervals that attain valid coverage and reach independence between sensitive attributes while making minimal modifications to the quantile endpoints simultaneously. It becomes superior within heteroskedastic and/or asymmetric datasets and robust to outliers.

Our method is supported by rigorous distribution-free coverage and exact DP-fairness guarantees, as proved in theoretical parts. We conducted several real data examples demonstrating the effectiveness of our method in achieving exact coverage while imposing DP-fairness in practice. The method outperforms several state-of-the-art approaches by comparison.

A limitation in our numerical experiments is that we simply utilize the local linear smoothing method in defining quantile functions of the subgroups; we believe incorporating flexible kernel smoothing approaches \cite{Yang_1985,Zhang_Muller_2011} would improve the experimental performances.

As potential future works, it would be valuable to introduce a DP relaxation framework based on an unfairness measure in a similar manner as \citep{Chzhen_Schreuder_2022,williamson2019fairness}, allowing controlling the level of unfairness in quantile estimates. We also expect to extend the scope to other potential fairness metrics which is dependent on the underlying response like equalizing
quantile loss across groups by incorporating a fairness penalty term in training, or the fairness metric defined for conditional variance-at-risk. In addition, it is worthwhile exploring the quantile fairness in other areas of AI, such as NLP\citep{ding2022word,hu2022balancing}, Computer Vision, Recommendation systems, etc.

\section*{Acknowledgements}
This work was supported by the Economic and Social Research Council (ESRC ES/T012382/1) and the Social Sciences and Humanities Research Council (SSHRC 2003-2019-0003) under the scheme of the Canada-UK Artificial Intelligence Initiative. The project title is BIAS: Responsible AI for Gender and Ethnic Labour Market Equality.
We thank Yang Hu, Xiaojun Du and Matthew Pietrosanu for their helpful discussion and valuable input and all the constructive 
suggestions and comments from the reviewers.


\bibliographystyle{abbrvnat}
\bibliography{bib}

\newpage

\textbf{\large Supplementary Material}
\appendix

\section{Conditions for Proposition 2.}
\emph{Condition (A1).} The quantile density functions $Q^{\prime}_{s^{\prime}}, s^{\prime}=1, \dots, K$, are twice continuously differentiable in $(0,1)$, and satisfy $\inf_{t \in[0,1]} Q^{\prime}_{s^{\prime}}(t) \geq c_{0}>0$, for a constant $c_{0}>0$,  $\forall s^{\prime}$. There is a $\gamma>0$ such that $\sup _{u \in[0,1]} u(1-u)|J_{s^{\prime}} (u)|  \leq \gamma$ for all $s^{\prime}$, where $J_{s^{\prime}}(u):=[\mathrm{d} \log Q^{\prime}_{s^{\prime}}(u) / \mathrm{d} u]$. 

\textit{Condition (A2).} There exists $0<L_{0}<\infty$, such that $\sup _{u \in[0,1]}|\int_{0}^{1} Q^{\prime}_{s^{\prime}}(t) K_h(u-t) \mathrm{d} t| \leq L_{0}, \forall k$.

\textit{Condition (A3).} The kernel functions $K$ is probability density functions which is symmetric around $0$. For any function $f$ that is at least twice continuously differentiable in $(0,1)$,  it holds that for a $\rho>1 / 2$, $\lim \sup _{N \rightarrow \infty} h^{2} N^{\rho}<\infty$, and
$
\sup _{u \in[a, b]\subset(0,1)}\left|f(u)-\int_{0}^{1} f(t) K_h(u-t) \mathrm{d} t\right|=O\left(N^{-\rho}\right),$
where $N$ is defined to ensure that the numbers of measurement asymptotically increases in the same way across the groups, we assume that there exists a sequence $N = N(n)$ with $N\rightarrow\infty,$ as $n\rightarrow\infty$, such that  $\nicefrac{N_{s^{\prime}}}{N}\rightarrow \tau_{s^{\prime}}$ for positive constants, and $0<c_0\le\inf_{1\le s^{\prime}\le K}\tau_{s^{\prime}}\le\sup_{1\le s^{\prime}\le K}\tau_{s^{\prime}}<C_0<\infty, s^{\prime}=1,\dots, K$.

\begin{remark}
Conditions (A1) - (A3) guarantee the existence of a strong approximation of the empirical quantile process by a sequence of weighted Brownian bridges as established in \citep{Miklos_Csorgo1978}. Condition (A3) posited on kernel functions assures that the integral transform $\hat{F}^{-1} \mapsto \hat{Q}$ possesses good approximation properties for smooth functions, and it is shown that (A3) holds for any difference kernel $\mathrm{d}_{t} K_{h}(u, t)=h^{-1} k\left((u-t)/h\right)\mathrm{d}t$ with a vanishing bandwidth $h$. for example, the gaussian density $K(u)=\exp(-{u^{2}}/{2 h^{2}})$ \cite{cheng1997unified,Yang_1985,Zhang_Muller_2011} or the triangular density function $K(u)=(1-|u|/h)I(|u|/h\le1)$ with a vanishing bandwidth $h_n$.
\end{remark}

\section{Proofs related to CFQP}
In this section, we prove the validity of the CFQP prediction intervals described in Section \ref{sec:CFP} of the main paper. First, we recap some results on distribution-free order statistics from \citet{Romano_Barber_Sabatti_Candes_2019}.

The quantile function $Q$ of a random variable $Z$, with cumulative distribution function $F(z):={P}\{Z \leq z\}$, is defined by the equivalence
$$
Q(\alpha) \leq z \quad \text { if and only if } \quad \alpha \leq F(z)
$$
for all $\alpha \in(0,1)$ and $z \in \mathbb{R}$. And less standardly, the right quantile function $R$ of the random variable $Z$ is defined by the equivalence
$F^{-}(z) \leq \alpha \text { if and only if }  z \leq R(\alpha),$
where $F^{-}(z):=F(z-)={P}\{Z<z\}$. The quantile functions have the explicit formulas
$$
Q(\alpha)=\inf \{z \in \mathbb{R}: \alpha \leq F(z)\}, \quad R(\alpha)=\sup \left\{z \in \mathbb{R}: F^{-}(z) \leq \alpha\right\} .
$$
As a special case, the empirical quantile function $\hat{Q}_{n}$ of random variables $Z_{1}, \ldots, Z_{n}$ is the quantile function with respect to the empirical $\operatorname{CDF} \hat{F}_{n}(z):=\frac{1}{n} \sum_{i=1}^{n} \mathbbm{1}\{Z_{i} \leq z\} .$ Likewise, the right empirical quantile function $\hat{R}_{n}$ of $Z_{1}, \ldots, Z_{n}$ is the right quantile function with respect to $\hat{F}_{n}^{-}(z)=\frac{1}{n} \sum_{i=1}^{n} \mathbbm{1}\{Z_{i}<z\}$. They have the explicit formulas
$$
\hat{Q}_{n}(\alpha)=Z_{(\lceil\alpha n\rceil)}, \quad \hat{R}_{n}(\alpha)=Z_{(\lfloor\alpha n\rfloor+1)},
$$

where $Z_{(k)}$ denotes the $k$ th smallest value in $Z_{1}, \ldots, Z_{n}$.
Several variants of the following lemmas are showed in \citet{Vovk_Gammerman_Shafer_2005},\citet{vovk2009line} and \citet{Romano_Patterson_Candes_2019}. We restate them here for clarification.

\begin{lemma}[Quantiles and exchangeability]\label{sec:appendix:lem:1}
Suppose $Z_{1}, \ldots, Z_{n}$ are exchangeable random variables. For any $\alpha \in(0,1)$,
$$
{P}\left\{Z_{n} \leq \hat{Q}_{n}(\alpha)\right\} \geq \alpha .
$$
Moreover, if the random variables $Z_{1}, \ldots, Z_{n}$ are almost surely distinct, then also
$$
{P}\left\{Z_{n} \leq \hat{Q}_{n}(\alpha)\right\} \leq \alpha+\frac{1}{n}.
$$
Here the probabilities are taken over all the variables $Z_{1}, \ldots, Z_{n}$.
\end{lemma}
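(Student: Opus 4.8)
The plan is to reduce the statement to the explicit representation $\hat{Q}_{n}(\alpha)=Z_{(\lceil \alpha n\rceil)}$ recorded just above and then extract the coverage from exchangeability by a symmetrization argument. Write $k:=\lceil \alpha n\rceil$, and note $1\le k\le n$ since $\alpha\in(0,1)$; let $Z_{(1)}\le\cdots\le Z_{(n)}$ be the sorted sample. Then the event of interest is exactly $\{Z_{n}\le Z_{(k)}\}$, and both inequalities in the lemma will follow from controlling $\frac1n\big|\{j\in[n]:Z_{j}\le Z_{(k)}\}\big|$.

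First I would introduce a uniformly random permutation $\tau$ of $[n]$, drawn independently of $Z_{1},\dots,Z_{n}$. The key observation is that the sorted vector is invariant under relabelling, so $(Z_{\tau(1)},\dots,Z_{\tau(n)})$ has the same order statistics as $(Z_{1},\dots,Z_{n})$; combining this with exchangeability gives the joint distributional identity
$$
\big(Z_{\tau(n)},\,Z_{(1)},\dots,Z_{(n)}\big)\overset{d}{=}\big(Z_{n},\,Z_{(1)},\dots,Z_{(n)}\big),
$$
and therefore ${P}\{Z_{n}\le Z_{(k)}\}={P}\{Z_{\tau(n)}\le Z_{(k)}\}$. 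Next, conditioning on $Z_{1},\dots,Z_{n}$, the index $\tau(n)$ is uniform on $[n]$, so
$$
{P}\big\{Z_{\tau(n)}\le Z_{(k)}\,\big|\,Z_{1},\dots,Z_{n}\big\}=\frac{1}{n}\sum_{j=1}^{n}\mathbbm{1}\{Z_{j}\le Z_{(k)}\}=\frac{1}{n}\big|\{j\in[n]:Z_{j}\le Z_{(k)}\}\big|.
$$
Since $Z_{(1)},\dots,Z_{(k)}$ all satisfy $Z_{(i)}\le Z_{(k)}$, this count is always at least $k$; taking expectations yields ${P}\{Z_{n}\le\hat{Q}_{n}(\alpha)\}\ge k/n=\lceil \alpha n\rceil/n\ge\alpha$, the first claim. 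When the $Z_{i}$ are almost surely distinct the count equals exactly $k$ almost surely, so the same computation gives ${P}\{Z_{n}\le\hat{Q}_{n}(\alpha)\}=k/n\le(\alpha n+1)/n=\alpha+1/n$, the second claim.

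The only delicate point is justifying the displayed distributional identity: one must keep the joint law of $Z_{\tau(n)}$ with the (permutation-invariant) order statistics, not merely the marginal of $Z_{\tau(n)}$. I would handle this by conditioning on $\{\tau=\sigma\}$ for each fixed permutation $\sigma$, where independence of $\tau$ and exchangeability of $Z$ give $(Z_{\sigma(1)},\dots,Z_{\sigma(n)})\overset{d}{=}(Z_{1},\dots,Z_{n})$, hence equality in law of the functional $(\text{last coordinate},\ \text{sorted vector})$ evaluated on each side, and then averaging over $\sigma$. Everything else is routine bookkeeping with the ceiling function and the elementary inequality $\big|\{j:Z_{j}\le Z_{(k)}\}\big|\ge k$ (with equality under distinctness); an alternative to the permutation device is a direct averaging of $\mathbbm{1}\{Z_{j}\le Z_{(k)}\}$ over $j\in[n]$ using that each summand has the same expectation by exchangeability.
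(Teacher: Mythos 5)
Your proof is correct: the randomization/averaging argument (the index of $Z_n$ is uniform given the order statistics, the number of sample points $\le Z_{(k)}$ is at least $k$ deterministically and exactly $k$ under a.s. distinctness, then $k=\lceil\alpha n\rceil$ gives both bounds) is exactly the standard proof. The paper itself does not prove this lemma but restates it from \citet{Vovk_Gammerman_Shafer_2005} and \citet{Romano_Patterson_Candes_2019}, where the argument is essentially the one you give, so there is nothing to flag.
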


\begin{lemma}[Inflation of quantiles]\label{sec:appendix:lem:2}
Suppose $Z_{1}, \ldots, Z_{n+1}$ are exchangeable random variables. For any $\alpha \in(0,1)$,
$$
{P}\left\{Z_{n+1} \leq \hat{Q}_{n}\left(\left(1+\frac{1}{n}\right) \alpha\right)\right\} \geq \alpha .
$$
Moreover, if the random variables $Z_{1}, \ldots, Z_{n+1}$ are almost surely distinct, then also
$$
{P}\left\{Z_{n+1} \leq \hat{Q}_{n}\left(\left(1+\frac{1}{n}\right) \alpha\right)\right\} \leq \alpha+\frac{1}{n+1} .
$$
\end{lemma}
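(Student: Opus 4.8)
The plan is to reduce Lemma \ref{sec:appendix:lem:2} to Lemma \ref{sec:appendix:lem:1} by applying the latter to the enlarged collection of $n+1$ exchangeable variables and then translating between the order statistics of the $(n+1)$-sample and those of the first $n$ points. First I would unwind the inflated quantile using the explicit formula $\hat{Q}_m(\beta)=Z_{(\lceil\beta m\rceil)}$: since $(1+1/n)\alpha\cdot n=\alpha(n+1)$, the threshold $\hat{Q}_n\!\left((1+\tfrac1n)\alpha\right)$ equals $Z_{(k)}$ with $k=\lceil\alpha(n+1)\rceil$, where $Z_{(k)}$ denotes the $k$-th smallest of $Z_1,\dots,Z_n$. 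When $(1+\tfrac1n)\alpha\ge 1$, i.e. $k\ge n+1$, the threshold is $+\infty$ by the usual conformal convention, the event has probability $1$, and both inequalities hold trivially; so I may assume $k\le n$.

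For the lower bound, with ties permitted, I would apply Lemma \ref{sec:appendix:lem:1} to the $n+1$ exchangeable variables $Z_1,\dots,Z_{n+1}$ with distinguished variable $Z_{n+1}$, yielding $P\{Z_{n+1}\le \hat{Q}_{n+1}(\alpha)\}\ge\alpha$, where $\hat{Q}_{n+1}(\alpha)=Z_{[k]}$ is the $k$-th smallest among all $n+1$ points. The key order-statistic fact is the interlacing inequality $Z_{[k]}\le Z_{(k)}$: deleting one point from a set can only raise its $k$-th order statistic, since the at-least-$k$ points of the reduced set lying at or below $Z_{(k)}$ still lie at or below it in the full set. Hence $\{Z_{n+1}\le Z_{[k]}\}\subseteq\{Z_{n+1}\le Z_{(k)}\}$, and monotonicity of probability gives $P\{Z_{n+1}\le Z_{(k)}\}\ge\alpha$.

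For the upper bound under almost-sure distinctness, I would instead compute the probability exactly via the rank of $Z_{n+1}$. Let $\rho\in\{1,\dots,n+1\}$ be the rank of $Z_{n+1}$ among $Z_1,\dots,Z_{n+1}$, so $Z_{n+1}=Z_{[\rho]}$; by exchangeability and distinctness $\rho$ is uniform on $\{1,\dots,n+1\}$. Deleting $Z_{n+1}$ relabels the remaining order statistics as $Z_{(i)}=Z_{[i]}$ for $i<\rho$ and $Z_{(i)}=Z_{[i+1]}$ for $i\ge\rho$; a short case check on $k<\rho$ versus $k\ge\rho$ then shows $\{Z_{n+1}\le Z_{(k)}\}=\{\rho\le k\}$. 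Therefore $P\{Z_{n+1}\le Z_{(k)}\}=k/(n+1)$, and substituting $k=\lceil\alpha(n+1)\rceil$ gives $k\ge\alpha(n+1)$ (re-confirming the lower bound) together with $k<\alpha(n+1)+1$, whence $P\le\alpha+1/(n+1)$.

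The main obstacle is the careful bookkeeping linking the order statistics of the $n$- and $(n+1)$-samples: establishing the interlacing inequality $Z_{[k]}\le Z_{(k)}$ in full generality for the lower bound, where ties are allowed, and the exact event identity $\{Z_{n+1}\le Z_{(k)}\}=\{\rho\le k\}$ under distinctness, along with correctly disposing of the boundary case $(1+\tfrac1n)\alpha\ge 1$. Once these index manipulations are in place, both assertions follow immediately from Lemma \ref{sec:appendix:lem:1} and the uniformity of the rank $\rho$.
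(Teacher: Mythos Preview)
The paper does not actually supply its own proof of Lemma~\ref{sec:appendix:lem:2}; it is stated as a restatement of results from \citet{Vovk_Gammerman_Shafer_2005}, \citet{vovk2009line}, and \citet{Romano_Patterson_Candes_2019}, and is then invoked as a black box in the proofs of Theorem~\ref{sec:theory:thm:cfr} and Corollary~1. So there is no in-paper proof to compare against.

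That said, your argument is correct and is precisely the standard route taken in those references. The identification $\hat{Q}_n\!\big((1+\tfrac1n)\alpha\big)=Z_{(k)}$ with $k=\lceil\alpha(n+1)\rceil$ is the crux, after which the lower bound follows from Lemma~\ref{sec:appendix:lem:1} applied to the $(n{+}1)$-sample together with the interlacing $Z_{[k]}\le Z_{(k)}$, and the upper bound from the uniformity of the rank $\rho$ under almost-sure distinctness and the event identity $\{Z_{n+1}\le Z_{(k)}\}=\{\rho\le k\}$. Your handling of the boundary case $(1+\tfrac1n)\alpha\ge1$ via the $+\infty$ convention is also the standard one. One small sharpening you could note: under distinctness the events $\{Z_{n+1}\le Z_{(k)}\}$ and $\{Z_{n+1}\le Z_{[k]}\}$ actually coincide (not merely nest), which is why Lemma~\ref{sec:appendix:lem:2} is sometimes phrased as ``$\hat{Q}_n\!\big((1+\tfrac1n)\alpha\big)$ plays the role of $\hat{Q}_{n+1}(\alpha)$ from the viewpoint of $Z_{n+1}$''; but your one-sided interlacing is already enough for the lower bound with ties, and your direct rank computation settles the distinct case exactly.
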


Applying the previous auxiliary lemmas, we can prove the validity of the CFQP prediction intervals described in Section \ref{sec:CFP}.

\textbf{Theorem 1.} If $(\tilde{X}_i,Y_i), i = 1, \dots,n+1$ are exchangeable, then the prediction interval $C(\tilde{X}_{n+1})$ constructed by the split CFQP algorithm satisfies $${P}\{Y_{n+1}\in C(\tilde{X}_{n+1})\}\ge 1-\alpha.$$ 
 Moreover, if the conformity scores $E_i$ are almost surely distinct, the prediction interval is nearly exactly calibrated, 
 $${P}\{Y_{n+1}\in C(\tilde{X}_{n+1})\}\le 1-\alpha+1/(|\mathcal{I}_2|+1).$$

\begin{proof}[Proof of Theorem \ref{sec:theory:thm:cfr}.]

Conditionally on the proper training set.
Denote by $E_{n+1}$ the conformity score
$$
E_{i}:=\max \left\{\hat{g}_{\alpha_{\mathrm{lo}}}\left(\tilde{X}_{i}\right)-Y_{i}, Y_{i}-\hat{g}_{\alpha_{\mathrm{hi}}}\left(\tilde{X}_{i}\right)\right\}
$$
at the test point $\left(\tilde{X}_{n+1}, Y_{n+1}\right)$. By the construction of the prediction interval, we have
$$
Y_{n+1} \in C\left(\tilde{X}_{n+1}\right) \quad \text { if and only if } \quad E_{n+1} \leq Q_{1-\alpha}\left(E, \mathcal{I}_{2}\right) \text {, }
$$
and thus,
\begin{equation}\label{sec:appendix:proof:thm1:eq1}
    {P}\left\{Y_{n+1} \in C\left(\tilde{X}_{n+1}\right) \mid\left(\tilde{X}_{i}, Y_{i}\right): i \in \mathcal{I}_{1}\right\}={P}\left\{E_{n+1} \leq Q_{1-\alpha}\left(E, \mathcal{I}_{2}\right) \mid\left(\tilde{X}_{i}, Y_{i}\right): i \in \mathcal{I}_{1}\right\}
\end{equation}

Since the original pairs $\left(\tilde{X}_{i}, Y_{i}\right)$ are exchangeable, so are the calibration variables $E_{i}$ for $i \in \mathcal{I}_{2}$ and $i=n+1$. Thus, by Lemma \ref{sec:appendix:lem:2} on inflated empirical quantiles,
\begin{equation}\label{sec:appendix:proof:thm1:eq2}
    {P}\left\{E_{n+1} \leq Q_{1-\alpha}\left(E, \mathcal{I}_{2}\right) \mid\left(\tilde{X}_{i}, Y_{i}\right): i \in \mathcal{I}_{1}\right\} \geq 1-\alpha,
\end{equation}
and, under the additional assumption that the $E_{i}$ 's are almost surely distinct,
\begin{equation}\label{sec:appendix:proof:thm1:eq3}
    {P}\left\{E_{n+1} \leq Q_{1-\alpha}\left(E, \mathcal{I}_{2}\right) \mid\left(\tilde{X}_{i}, Y_{i}\right): i \in \mathcal{I}_{1}\right\} \leq 1-\alpha+\frac{1}{\left|\mathcal{I}_{2}\right|+1}
\end{equation}

The exact coverage result is derived by taking expectations over the proper training set in \ref{sec:appendix:proof:thm1:eq1}, \ref{sec:appendix:proof:thm1:eq2}, and \ref{sec:appendix:proof:thm1:eq3}. 
\end{proof}

Next, we likewise prove the validity of the extended CFQP prediction intervals that control the left and right tails independently.

\textbf{Corollary 1.} Define the prediction interval $$
C(\tilde{X}_{n+1}):=\left[\hat{g}_{\alpha_{\mathrm{lo}}}\left(\tilde{X}_{n+1}\right)-Q_{1-\alpha_{\mathrm{lo}}}\left(E_{\mathrm{lo}}, \mathcal{I}_{2}\right), \hat{g}_{\alpha_{\mathrm{hi}}}\left(\tilde{X}_{n+1}\right)+Q_{1-\alpha_{\mathrm{hi}}}\left(E_{\mathrm{hi}}, \mathcal{I}_{2}\right)\right]$$
where $Q_{1-\alpha_{\mathrm{lo}}}\left(E_{\mathrm{lo}}, \mathcal{I}_{2}\right)$ is the $\left(1-\alpha_{\mathrm{lo}}\right)$-th empirical quantile of $\left\{\hat{g}_{\alpha_{\mathrm{lo}},i}-Y_{i}: i \in \mathcal{I}_{2}\right\}$ and $Q_{1-\alpha_{\mathrm{hi}}}\left(E_{\mathrm{hi}}, \mathcal{I}_{2}\right)$ is the $\left(1-\alpha_{\mathrm{hi}}\right)$-th empirical quantile of $\left\{Y_{i}-\hat{g}_{\alpha_{\mathrm{hi}},i}: i \in \mathcal{I}_{2}\right\}$. If the samples $\left(\tilde{X}_{n+1}, Y_{i}\right), i=1, \ldots, n+1$ are exchangeable, then
\begin{flalign}
\label{sec:appendix:proof:cor1:eq1}& &{P}\left\{Y_{n+1} \geq \hat{g}_{\alpha_{\mathrm{lo}}}\left(\tilde{X}_{n+1}\right)-Q_{1-\alpha_{\mathrm{lo}}}\left(E_{\mathrm{lo}}, \mathcal{I}_{2}\right)\right\} \geq 1-\alpha_{\mathrm{lo}},&&\\
\label{sec:appendix:proof:cor1:eq2}&\text{and}&{P}\left\{Y_{n+1} \leq \hat{g}_{\alpha_{\mathrm{hi}}}\left(\tilde{X}_{n+1}\right)+Q_{1-\alpha_{\mathrm{hi}}}\left(E_{\mathrm{hi}}, \mathcal{I}_{2}\right)\right\} \geq 1-\alpha_{\mathrm{hi}}.&&
\end{flalign}
Consequently, we also have ${P}\left\{Y_{n+1} \in C\left(\tilde{X}_{n+1}\right)\right\} \geq 1-\alpha$ assuming $\alpha=\alpha_{\mathrm{lo}}+\alpha_{\mathrm{hi}}$.

\begin{proof}
The two events inside the probabilities \eqref{sec:appendix:proof:cor1:eq1} as well as \eqref{sec:appendix:proof:cor1:eq2} are equivalent to $\hat{g}_{\alpha_{\mathrm{lo}}}\left(X_{n+1}\right)-Y_{n+1} \leq$ $Q_{1-\alpha_{\mathrm{lo}}}\left(E_{\mathrm{lo}}, \mathcal{I}_{2}\right)$ and $Y_{n+1}-\hat{g}_{\alpha_{\mathrm{hi}}}\left(X_{n+1}\right) \leq Q_{1-\alpha_{\mathrm{hi}}}\left(E_{\mathrm{hi}}, \mathcal{I}_{2}\right)$, respectively. The results are derives by applying Lemma \ref{sec:appendix:lem:2} twice, in the same manner as in the proof of Theorem \ref{sec:theory:thm:cfr}.
\end{proof}


\section{Proofs related to DP} 

\begin{proof}[Proof of Proposition \ref{sec:cfr:prop:smooth_quantile_est}]

By Theorem 2.1(2) in \citet{cheng1997unified}, we have
\begin{equation}\label{sec:appendix:proof:DP:eq1}
   \sup _{t \in[0,1]}\left|\hat{Q}_{2, q_{\alpha}|s^{\prime}}(t)-{Q}_{q_{\alpha}|s^{\prime}}(t)\right|=O_{p}\left(N^{-1 / 2}+N^{-\rho}\right)=O_{p}\left(N^{-1 / 2}\right), 
\end{equation}
for each $k$, as $\rho>1 / 2$.
according to Assumption (A1) , conditions (Q1)-(Q3) in \citet{cheng1997unified} are satisfied, and since we choose the kernel function $K_h$ with the properties in (A3), conditions (K1)-(K3) in \citet{cheng1997unified} are ensured. Moreover, the extension from $[a, b] \subset(0,1)$ to $[0, 1]$ is made possible by Assumption (A1). At last, assumption (A2) and the fact that the bound in equation (2.7) in \citet{cheng1997unified} is a universal bound which does not depend on $s^{\prime}$ allow the extension from Eq. \ref{sec:appendix:proof:DP:eq1} to $$
\sup _{s^{\prime}} \sup _{t \in[0,1]}\left|\hat{Q}_{2, q_{\alpha}|s^{\prime}}(t)-{Q}_{q_{\alpha}|s^{\prime}}(t)\right|=O_{p}\left(N^{-\nicefrac{1}{2}}\right), \quad s^{\prime}=1, \ldots, K.$$
\end{proof}

{
Before showing the exact DP guarantee, we utilize Lemma E.1. (stated in Lemma 3) from \citet{Chzhen_Schreuder_2022} and references therein, where rigorous proofs are given.
}
\begin{lemma}\label{sec:appendix:lem3}
Let $V_{1}, \ldots, V_{n}, V_{n+1}, n \geq 1$ be exchangeable real-valued random variables and $U$ distributed uniformly on $[0,1]$ be independent from $V_{1}, \ldots, V_{n}, V_{n+1}$, then the constructed location statistic
$$
T\left(V_{1}, \ldots, V_{n}, V_{n+1}, U\right)=\frac{1}{n+1}\left(\sum_{i=1}^{n} \mathbbm{1}\left\{V_{i}<V_{n+1}\right\}+U \cdot\left(1+\sum_{i=1}^{n} \mathbbm{1}\left\{V_{i}=V_{n+1}\right\}\right)\right)
$$
is distributed uniformly on $[0,1]$.

\end{lemma}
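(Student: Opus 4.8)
The plan is to prove that $T$ is uniform on $[0,1]$ by conditioning on the \emph{unordered} collection of sample values and exploiting exchangeability together with the independent randomizer $U$. First I would condition on the multiset $\mathcal{M} = \{V_1, \dots, V_{n+1}\}$, writing its distinct values as $a_1 < \cdots < a_m$ with multiplicities $k_1, \dots, k_m$ satisfying $\sum_{j} k_j = n+1$. Exchangeability of $(V_1, \dots, V_{n+1})$ implies that, conditional on $\mathcal{M}$, the final coordinate $V_{n+1}$ is equally likely to be any of the $n+1$ entries; hence ${P}(V_{n+1} = a_j \mid \mathcal{M}) = k_j/(n+1)$. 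Moreover, since $U$ is independent of $(V_1,\dots,V_{n+1})$, it remains uniform on $[0,1]$ even after conditioning on $\mathcal{M}$ and on the event $\{V_{n+1}=a_j\}$.

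Next I would evaluate the two counting terms on the event $\{V_{n+1} = a_j\}$. Because $V_{n+1}$ itself equals $a_j$, every value strictly smaller than $a_j$ in the whole sample must sit among $V_1, \dots, V_n$, so $\sum_{i=1}^n \mathbbm{1}\{V_i < V_{n+1}\} = L_j$, where $L_j := k_1 + \cdots + k_{j-1}$ (with $L_1 := 0$). Likewise, exactly $k_j - 1$ of the first $n$ variables equal $a_j$, so $1 + \sum_{i=1}^n \mathbbm{1}\{V_i = V_{n+1}\} = k_j$. Both counts are deterministic given $\mathcal{M}$ and $V_{n+1}=a_j$, so on this event
\[
T = \frac{1}{n+1}\bigl(L_j + U k_j\bigr).
\]
Since $U k_j$ is uniform on $[0,k_j]$, the variable $T$ is uniform on the subinterval $[L_j/(n+1),\, L_{j+1}/(n+1)]$ with $L_{j+1} := L_j + k_j$.

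Finally I would assemble the conditional law of $T$ given $\mathcal{M}$ as a mixture over $j$. The subintervals $[L_j/(n+1),\, L_{j+1}/(n+1)]$, $j = 1, \dots, m$, partition $[0,1]$ (as $L_1 = 0$ and $L_{m+1} = n+1$), and the mixing weight on the $j$-th cell is ${P}(V_{n+1} = a_j \mid \mathcal{M}) = k_j/(n+1)$, which is exactly the length of that cell. A mixture of uniform laws over the cells of a partition, each weighted by its own length, has constant density $1$ on $[0,1]$; hence $T \mid \mathcal{M}$ is uniform on $[0,1]$. As this holds for every realization of $\mathcal{M}$, integrating out $\mathcal{M}$ yields the unconditional uniformity of $T$.

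The main obstacle is the careful bookkeeping of ties: one must verify precisely why the strict inequality $\mathbbm{1}\{V_i < V_{n+1}\}$ counts exactly $L_j$, while the extra $+1$ inside the $U$-term upgrades the $k_j-1$ observed ties to multiplicity $k_j$, so that the randomizer smears $T$ across an interval whose length coincides with the conditional probability of landing in that cell. Once this alignment is established the uniformity falls out immediately, and the continuous (tie-free) case is recovered as the special case $k_j \equiv 1$.
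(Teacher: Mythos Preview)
Your argument is correct. Conditioning on the multiset $\mathcal{M}$, using exchangeability to deduce ${P}(V_{n+1}=a_j\mid\mathcal{M})=k_j/(n+1)$, and observing that on $\{V_{n+1}=a_j\}$ the statistic reduces to $(L_j+Uk_j)/(n+1)$, is exactly the right decomposition; the tie bookkeeping you flag is handled precisely as you describe, and the mixture-of-uniforms step is valid because each cell has uniform density $(n+1)/k_j$ and mixing weight $k_j/(n+1)$.

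There is nothing to compare against in the paper: it does not prove this lemma but merely invokes it as Lemma~E.1 of \citet{Chzhen_Schreuder_2022} (``rigorous proofs are given'' there and in the references therein). So your proposal is in fact supplying a self-contained proof where the paper defers to the literature. If you want to be fully rigorous, you might add one sentence making explicit that the endpoint overlaps $\{U=0\}$ and $\{U=1\}$ have probability zero, so the cells partition $[0,1]$ up to a Lebesgue-null set; otherwise the argument is complete.
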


{
The proof of Theorem \ref{sec:theory:thm:DP} is a direct adaptation of quantile demographic parity guarantee from \citet{Chzhen_Schreuder_2022} for the mean regression.
}
\begin{proof}[Proof of Theorem \ref{sec:theory:thm:DP}.]
To prove the claimed DP guarantee for fixed quantile level $\alpha\in\{\alpha_\mathrm{lo},\alpha_\mathrm{hi}\}$, we will show that the Kolmogorov-Smirnov distance between $\nu_{\hat{g}_\alpha\mid s}$ and $\nu_{\hat{g}_\alpha\mid s^\prime}$ equals to zero for any $s \neq s^{\prime} \in[K]$.

Note that, according to the formulation of $\hat{g}_\alpha$ in Eq. \eqref{sec:cfr:eq:g_hat_test}, we have for any $(x, s) \in \mathbb{R}^{p} \times[K]$,
$$
 \hat{g}_{\alpha}(x,s)=\sum_{s^{\prime}=1}^{K} \hat{p}_{s^{\prime}} \hat{Q}_{2, q_{\alpha}|s^{\prime}} \circ \hat{F}_{1, q_{\alpha}|s} \circ \tilde{q}_{\alpha}(x,s),\forall \alpha\in\{\alpha_{\mathrm{lo}},\alpha_{\mathrm{hi}}\}.
$$
Denote by $\hat{Q}(t)=\sum_{s^{\prime}=1}^{K} \hat{p}_{s^{\prime}} \hat{Q}_{2, q_{\alpha}|s^{\prime}}$.
 Note that we use the training set to estimate the location statistic for the new test point, $\hat{Q}(t)$ is independent from $\hat{F}_{1, q_{\alpha}|s} \circ \tilde{q}_{\alpha}(x,s)$ for each $s \in[K]$.

Since the test point belongs to group $S=s$ for some fixed $s \in[K]$ and, for all $i=1, \ldots, |\mathcal{I}^s_1|$, set $V_{i}=\tilde{q}_{1,i}^{s}$ with $V_{N_{s}+1} \overset{d}{=}(\tilde{q}_\alpha(x,s))$ independent from $\left(V_{i}\right)_{i=1, \ldots, |\mathcal{I}^s_1|}$. Since the random variables $V_{1}, \ldots, V_{N_{s}}, V_{N_{s}+1}$ are exchangeable (more ideally, independent), Lemma \ref{sec:appendix:lem3} implies that for all $s \in[K]$, the location statistic
$\hat{F}_{1, q_{\alpha}|s} \circ \tilde{q}_{\alpha}(x,s)$ is distributed uniformly on $[0,1]$.
Thus for all $s, s^{\prime} \in[K]$, we have
$$
\begin{aligned}
&\mathrm{KS}\left(\nu_{\hat{g}_\alpha\mid s},\nu_{\hat{g}_\alpha\mid s^\prime}\right) \\
=&\sup _{t \in \mathbb{R}}\left|{P}(\hat{g}_\alpha\leq t \mid S=s)-{P}\left(\hat{g}_\alpha \leq t \mid S=s^{\prime}\right)\right| \\
=&\sup _{t \in \mathbb{R}}\left|{P}\left(\hat{F}_{1, q_{\alpha}|s} \circ \tilde{q}_{\alpha}(x,s) \leq \hat{Q}^{-1}(t) \mid S=s\right)-{P}\left(\hat{F}_{1, q_{\alpha}|s} \circ \tilde{q}_{\alpha}(x,s) \leq \hat{Q}^{-1}(t) \mid S=s^{\prime}\right)\right| \\
=&\sup _{t \in \mathbb{R}}\left|{E}\left[\hat{Q}^{-1}(t) \mid S=s\right]-{E}\left[\hat{Q}^{-1}(t) \mid S=s^{\prime}\right]\right|=0 .
\end{aligned}
$$
The first equality uses the definition of Eq. \eqref{eq:sec:cfr:ghat_est}; the second uses the fact that $\hat{Q}$ is monotone by construction \cite{Chzhen_Schreuder_2022}; finally since the independence of $\hat{Q}$ is independent from $\hat{F}_{1, q_{\alpha}|s} \circ \tilde{q}_{\alpha}(x,s)$ conditionally on $S=s$ for any $s \in[K]$, also $\hat{Q}$ remains independent from $S$. The exact DP is concluded.
\end{proof}

\section{Experiments}

\subsection{Data Description \& Pre-processing}

\begin{enumerate}
\itemsep0em
\item \textit{Law School (LAW)}\cite{wightman1998lsac}: The dataset contains 20,649 examples aiming to predict students' GPA based on their information and capacities, with gender as the sensitive attribute (male vs. female).
 \item \textit{Community\&Crime (CRIME)}\cite{redmond2002data}: This dataset contains socio-economic, law enforcement, and crime data about communities in the US  with 1,994 examples. The task is to predict the number of violent crimes per 100,000 population 
with race as the sensitive attribute. 
\item  \textit{MEPS 2016 (MEPS)}\cite{MEPS_2020,Romano_Patterson_Candes_2019}: The Medical Expenditure Panel Survey 2016 dataset contains information on individuals and their utilization of medical services. The goal is to predict the health care system utilization score of each individual by their features including age, marital status, race, poverty status, health status, health insurance type, and more. 
There are 15,656 examples on $p = 41$ features with race as the sensitive attribute (nonwhite vs. white). 
\item \textit{Government Salary (GOV)}\cite{Plecko_Meinshausen,plevcko2021fairadapt}: The government salary dataset (available in R package "fairadapt") is collected from the 2018 American Community Survey by the US Census Bureau. The yearly salary for over 200,000 examples is
the response variable, and employee race (7 categories) is identified as the sensitive attribute.
\end{enumerate}
In order to smoothly run the regression model, several preprocessing steps are utilized before running the model. For example, in the CRIME dataset, we impute the missing value by mean, and the race feature is created by the maximum value of the four races. The clean data are uploaded to the GitHub repo for future research.

\subsection{Adaptation of other algorithms in \cite{Agarwal_Dudik_Wu_2019,chzhen_denis_hebiri_oneto_pontil_2020} }
 First, our approach is built upon the one proposed by \citet{chzhen_denis_hebiri_oneto_pontil_2020}, we incorporated the kernel smoothing procedure in quantile estimation and applied the local linear smoothing method provided in \emph{NumPy} \cite{harris2020array} which shows its functionality when there are subgroups of small sample sizes, especially for the \textit{CRIME} dataset. It is also feasible to compute Eq. \eqref{eq:sec:cfr:F_hat_2inv} via some global kernels, such as the gaussian density $K(u)=\exp(-{u^{2}}/{2 h^{2}})$ \cite{cheng1997unified,Yang_1985,Zhang_Muller_2011} or the triangular density function $K(u)=(1-|u|/h)I(|u|/h\le1)$ with a vanishing bandwidth $h_n$. For the bandwidth selection, there is a publicly available R package \emph{lokern} with the global bandwidth choice.

Second, we adjusted the reduction-based approach in \citet{Agarwal_Dudik_Wu_2019} in several respects: we rescale and discretize the responses, and modify their algorithm by replacing the loss function $l$ by $\rho_\alpha$ of Eq. \eqref{sec:quant:eq:quant_obj} in our paper. It is worth mentioning that the reduction-based approach is sensitive to the hyperparameters: discretization parameter $N$ and slack $\hat{\varepsilon}_a$ in training. We used logistic regression and SVM classifiers in tuning.

\subsection{Additional experiment results}

We show the additional results using quantile random forest and quantile neural network for the comparison of our post-processing fairness adjustment procedure in figures \ref{fig:result:comparison_rf} and \ref{fig:result:comparison_nn}. The three quantile models present consistent results, our post-processing based upon kernel smoothing outperforms the other two approaches, which is reflected in more KS value reduction. We also conducted a brief ablation study by testing the experimental results of removing either or both of the smoothing strategies in our CFQP approach. Results are presented in table \ref{sec:exp:tab:result_ablation}. We found incorporating the jittering and kernel smoothing methods works better when the subgroups are unbalanced and there exist subgroups of small sample sizes, especially for the \textit{CRIME} dataset.
Finally, the computational time analysis is also incorporated in figure \ref{fig:result:comparison_time}.

\begin{minipage}[!htbp]{\textwidth}
\begin{center}
\captionsetup{type=figure}
\includegraphics[width=\textwidth]{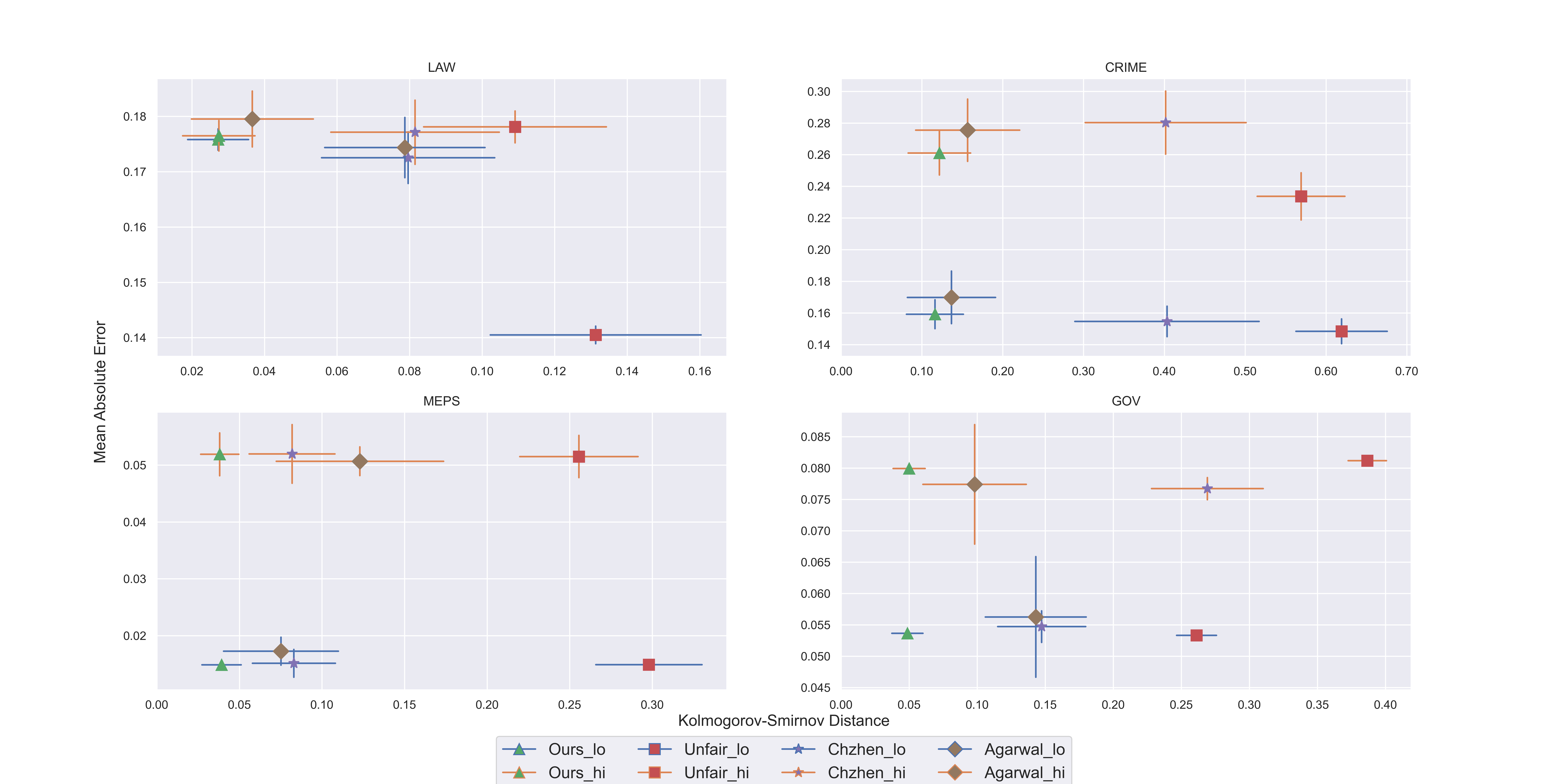}
\captionof{figure}{ Results using Quantile random forest for estimating the lower ($\alpha_{\mathrm{lo}}$) and upper ($\alpha_{\mathrm{hi}}$) quantiles using some state-of-the-art DP-fairness requirement methods on all the datasets. `Unfair', `Chzhen', and `Agarwal' stand for the quantile model without fairness adjustment, barycenter method \citep{chzhen_denis_hebiri_oneto_pontil_2020} and reduction-based algorithm \citep{Agarwal_Dudik_Wu_2019} respectively. We present the MAE and KS of lower quantile estimation, as well as upper quantile estimation. We set `n estimator' to 50.}
\label{fig:result:comparison_rf}
\end{center}
\end{minipage}

\begin{minipage}[!htbp]{\textwidth}
\begin{center}
\captionsetup{type=figure}
\includegraphics[width=\textwidth]{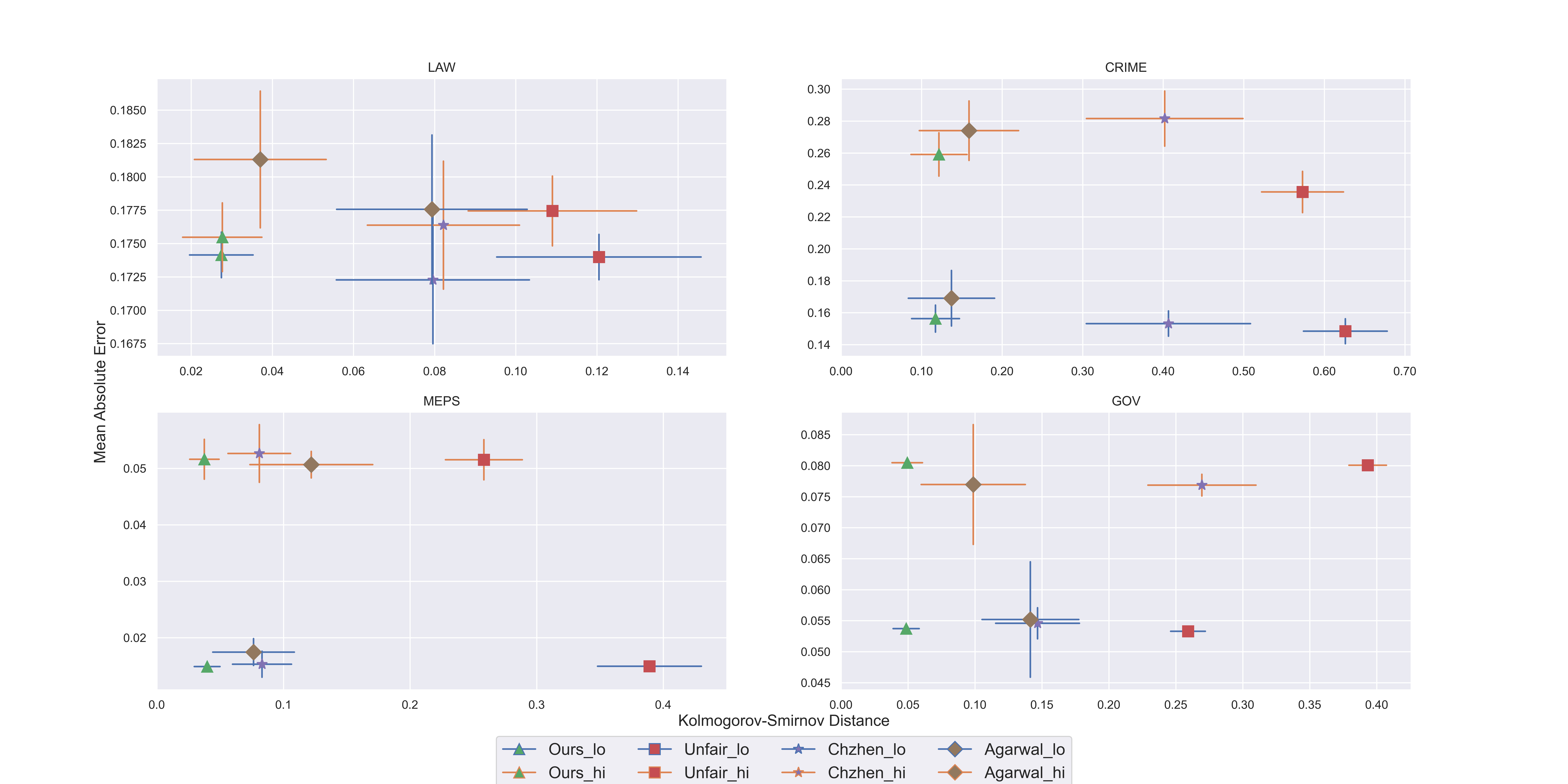}
\captionof{figure}{  Results using Quantile neural network model for estimating the lower ($\alpha_{\mathrm{lo}}$) and upper ($\alpha_{\mathrm{hi}}$) quantiles using some state-of-the-art DP-fairness methods on all the datasets. `Unfair', `Chzhen', and `Agarwal' stand for the  quantile model without fairness adjustment, barycenter method \citep{chzhen_denis_hebiri_oneto_pontil_2020} and reduction-based algorithm \citep{Agarwal_Dudik_Wu_2019} respectively. We present the MAE and KS of lower quantile estimation, as well as upper quantile estimation.}
\label{fig:result:comparison_nn}
\end{center}
\end{minipage}


\begin{minipage}[!htbp]{\textwidth}
\centering
\scriptsize
\scalebox{1.2}{
\begin{tabular}{*{5}{>{\scriptsize}c}}
    \toprule
    \multirow{2}{*}{} &
      \multicolumn{4}{c}{LAW}  \\
      \cmidrule(lr){2-5} 
      & Coverage & Length & KS(lo) & KS(hi) \\
    \midrule

\textbf{Ln-CFQP} (with both smoothing)   &90.02$\pm$0.51 & 0.46$\pm$.004  & 0.02$\pm$0.01 & 0.02$\pm$0.01  \\

  Ln-CFQP (with jittering) & 89.99$\pm$0.49 & 0.39$\pm$.002  & 0.03$\pm$.008 & 0.03$\pm$0.01 \\
  
  Ln-CFQP (with kernel smoothing) & 89.99$\pm$0.50 & 0.38$\pm$.002 & 0.03$\pm$0.009 & 0.03$\pm$0.01 \\

  Ln-CFQP (without smoothing)  & 89.96$\pm$0.49 & 0.40$\pm$.002 & 0.04$\pm$0.01 & 0.03$\pm$0.01  \\
  \midrule
  \multirow{2}{*}{} &
      \multicolumn{4}{c}{CRIME}  \\
      \cmidrule(lr){2-5}
      & Coverage & Length & KS(lo) & KS(hi) \\
    \midrule
\textbf{Ln-CFQP} (with both smoothing)   & 90.44$\pm$1.84 & 1.64$\pm$0.05 & 0.11$\pm$0.03 & 0.12$\pm$0.04 \\

  Ln-CFQP (with jittering)  & 90.55$\pm$1.35 & 1.69$\pm$0.05 & 0.31$\pm$0.11 & 0.33$\pm$0.12 \\
  
  Ln-CFQP (with kernel smoothing)  & 90.58$\pm$1.36 & 1.69$\pm$0.05 & 0.25$\pm$0.13 & 0.29$\pm$0.12 \\

  Ln-CFQP (without smoothing)  & 90.53$\pm$1.27 & 1.69$\pm$0.05 & 0.36$\pm$0.12 & 0.40$\pm$0.12 \\

\bottomrule
\end{tabular}
}
 \captionsetup{type=table}
\captionof{table}{Ablation test results when removing either or both of the smoothing strategies. Our methods are shown in bold.}
\label{sec:exp:tab:result_ablation}
\end{minipage}

In the current experiment, the kernel we used is the local linear one in defining the quantile functions of subgroups instead of the global kernels. When calculating the $\tau$-th quantile $x_\tau$ of $q_\alpha$ using local linear smoothing, we choose a constant distance size $h$ (kernel radius) and compute a weighted average for all data points that are closer to $x_\tau$  (the closer to $x_\tau$ points get higher weights). The time complexity for computing the local kernel smoother (Eq.(8)) is $\mathcal{O}(1)$, while if we applied the global kernels in Eq.(8), $\mathcal{O}(n)$ time would cost.

For the CFQP method, The steps determining the time complexity of Algorithms 1 and 2 reside in the following two parts: 

\begin{enumerate}
    \item The for-loop where we perform a post-processing which takes $\sum_{s^\prime \in [K]} \mathcal{O}\left(N_{s^\prime} \log N_{s^\prime}\right)$ time, as we need to sort the grouped samples; 
    \item The evaluation of $\hat{g}_\alpha$ on a new point $(x, s)$ is performed in $\max _{s^\prime \in [K]} \mathcal{O}(\log N_{s})$ time as it involves locating $\hat{g}_\alpha$ in a sorted array.
\end{enumerate} 

\begin{minipage}[!htbp]{\textwidth}
\begin{center}
\captionsetup{type=figure}
\includegraphics[width=0.7\textwidth]{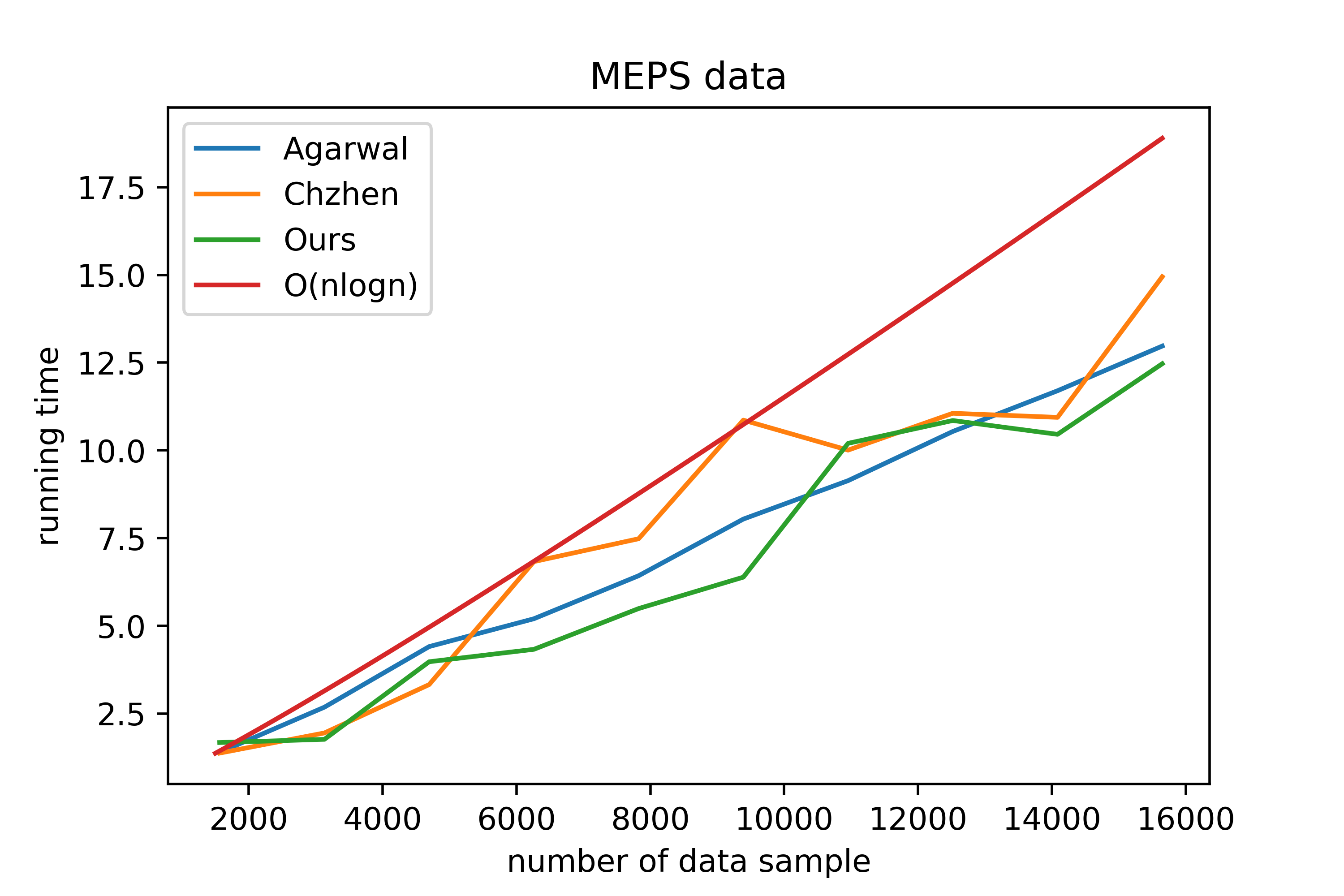}
\captionof{figure}{Empirical running times of methods used for experimental comparisons. We utilized the linear quantile model on the MEPS dataset. For methods of "Chzhen" and "Agarwal", normalizing factors are applied to present the graph.}
\label{fig:result:comparison_time}
\end{center}
\end{minipage}

\end{document}